\theoremstyle{plain}
\theoremstyle{definition}
\theoremstyle{remark}
\newcommand{\ours}[0]{\textbf{iMSDA}}
\newcommand{\Pb}[1]{{\mathbb{P}}\left[#1\right] }
\newcommand{\rvline}{\hspace*{-\arraycolsep}\vline\hspace*{-\arraycolsep}}
\newcommand{\uu}{{\mathbf u}}
\newcommand{\xx}{{\mathbf x}}
\newcommand{\zz}{{\mathbf z}}
\newcommand{\yy}{{\mathbf y}}
\newcommand{\cT}{\mathcal{T}}
\newcommand{\cL}{\mathcal{L}}
\newcommand{\cN}{\mathcal{N}}
\newcommand{\mI}{\mathbf{I}}
\newcommand{\mH}{\mathbf{H}}
\newcommand{\mJ}{\mathbf{J}}
\icmltitlerunning{Partial Identifiability for Domain Adaptation}
\begin{document}

\twocolumn[
\icmltitle{Partial Identifiability for Domain Adaptation}



{\icmlsetsymbol{equal}{*}}

\begin{icmlauthorlist}
\icmlauthor{Lingjing Kong}{cmu}
\icmlauthor{Shaoan Xie}{cmu}
\icmlauthor{Weiran Yao}{cmu}
\icmlauthor{Yujia Zheng}{cmu}
\icmlauthor{Guangyi Chen}{mbzuai,cmu}
\icmlauthor{Petar Stojanov}{broad}
\icmlauthor{Victor Akinwande}{cmu}
\icmlauthor{Kun Zhang}{mbzuai,cmu}
\end{icmlauthorlist}

\icmlaffiliation{cmu}{Carnegie Mellon University, USA}
\icmlaffiliation{mbzuai}{Mohamed bin Zayed University of Artificial Intelligence, UAE}
\icmlaffiliation{broad}{Broad Institute of MIT and Harvard, USA}


\icmlcorrespondingauthor{Kun Zhang}{kunz1@cmu.edu}

\icmlkeywords{Machine Learning, ICML}

\vskip 0.3in
]



\printAffiliationsAndNotice{}  

\begin{abstract}
    Unsupervised domain adaptation is critical to many real-world applications where label information is unavailable in the target domain.
    In general, without further assumptions, the joint distribution of the features and the label is not identifiable in the target domain.  
    To address this issue, we rely on the property of minimal changes of causal mechanisms across domains to minimize unnecessary influences of distribution shift.
    To encode this property, we first formulate the data-generating process using a latent variable model with two partitioned latent subspaces: invariant components whose distributions stay the same across domains and sparse changing components that vary across domains.
    We further constrain the domain shift to have a restrictive influence on the changing components.
    Under mild conditions, we show that the latent variables are partially identifiable, from which it follows that the joint distribution of data and labels in the target domain is also identifiable. 
    Given the theoretical insights, we propose a practical domain adaptation framework called \ours.
    Extensive experimental results reveal that \ours\ outperforms state-of-the-art domain adaptation algorithms on benchmark datasets, demonstrating the effectiveness of our framework.

\end{abstract}

\section{Introduction}
\label{sec:introduction}

Unsupervised domain adaptation (UDA) is a form of prediction setting in which the labeled training data and the unlabeled test data follow different distributions. UDA can frequently be done in settings where multiple labeled training datasets are available, and this is termed as multiple-source UDA. Formally, given features $\xx $, target variables $ \yy $, and domain indices $\uu$, the training (source domain) data follows multiple joint distributions $p_{\xx, \yy | \uu_{1}}$, $p_{\xx, \yy| \uu_{2}}$, ..., $p_{\xx, \yy|\uu_{M}}$,\footnote{
    With a slight abuse of notation, we use $p_{ \xx, \yy | \uu' } $ to represent the conditional distribution for a \textit{specific} domain $\uu'$, i.e.\ $p_{ \xx, \yy | \uu } (\xx, \yy | \uu' ) $.
} and the test (target domain) data follows the joint distribution $p_{\xx, \yy|\uu^{\cT}}$, where $p_{\xx, \yy | \uu} $ may vary across $\uu_{1}$, $\uu_{2}$, ..., $\uu_{M} $. During training, for each $i$-th source domain, we are given labeled observations $(\xx_{k}^{(i)}, \yy_{k}^{(i)})_{k=1}^{m_i}$ from $p_{\xx, \yy| \uu_{i}}$, and target domain unlabeled instances $ (\xx^{\cT}_{k})_{k=1}^{m_T}$ from $p_{\xx, \yy|\uu^{\cT}}$. The main goal of domain adaptation is to make use of the available observed information to construct a predictor that will have optimal performance in the target domain. \par 
It is apparent that without further assumptions, this objective is ill-posed. Namely, since the only available observations in the target domain are from the marginal distribution $p_{\xx|\uu^{\cT}}$, the data may correspond to infinitely many joint distributions $p_{\xx, \yy|\uu^{\cT}}$. This mandates making additional assumptions on the relationship between the source and the target domain distributions, with the hope of being able to reconstruct (i.e., identify) the joint distribution in the target domain $p_{\xx, \yy|\uu^{\cT}}$. Typically, these assumptions entail some measure of similarity  across all of the joint distributions. One classical assumption is that each joint distribution shares the same conditional distribution $p_{\yy|\xx}$, whereas $p_{\xx | \uu}$ changes, more widely known as the covariate shift setting \cite{Shimodaira00,Sugiyama08,Zadrozny04,Huang07,cortes10}. In addition, assumptions can be made that the source and the target domain distributions are related via the changes in $p_{\yy|\uu}$ or one or more linear transformations of the features $\xx$~\cite{zhang2013domain,zhang2015multi,gong2016domain}. \par 
In order to drop any parametric assumptions regarding the relationship between the domains, one may want to consider a more abstract assumption of \textit{minimal changes}. To make reasoning about this notion easier, it is often useful to consider the data-generating process \cite{Scholkopf12,zhang2013domain}. For example, if the data-generating process is $Y \rightarrow X$, then $p_{\yy | \uu}$ and $p_{\xx|\yy, \uu}$ change independently across domains, and factoring the joint distribution in terms of these factors makes it possible to represent its changes in the most parsimonious way. Furthermore, the changes of the distribution $p_{\xx|\yy, \uu}$ can be represented as lying on a low-dimensional manifold \cite{stojanov2019data}. The generating process of the observed features and its changing parts across domains can also be automatically discovered from multiple-domain data \cite{zhang2017causal,Huang_JMLR20}, and the changes can be captured using conditional GANs \cite{zhang2020domain}.  \par 
With the advent of deep learning capabilities, another approach is to harness these deep architectures to learn a feature map to some latent variable $\zz$, such that $p_{\zz|\uu_{1}} = p_{\zz|\uu_{2}}=...=p_{\zz|\uu_{M}}=p_{\zz|\uu^{\cT}}$ (i.e., \textit{marginally} invariant), while training a classifier $f_{\text{cls}}: \mathcal{Z} \rightarrow \mathcal{Y}$ on the labeled source domain data, which ensures that $\zz$ has predictive information about $\yy$ \cite{ganin2014unsupervised,ben2010theory,zhao2018adversarial}. This could still mean that the joint distributions $p_{\zz, \yy|\uu}$ may be different across domains $\uu_{1}$, ..., $\uu_{M}$, leading to potentially poor prediction performance in the target domain. Solving the problem has been attempted using deep generative models and enforcing disentanglement of the latent representation \cite{lu2020invariant, cai2019learning}. However, none of these efforts establishes identifiability of $p_{\xx, \yy|\uu^{\cT}}$.  \par 
Since deep architectures are a requirement for strong performance on high-dimensional datasets, it has become essential to combine their representational power with the notion of minimal changes in the joint distribution across domains. In fact, one can make reasonable assumptions on the data-generating process and represent the minimal changes of $p_{\xx, \yy|\uu}$ across domains such that both $\zz$ and $p_{\xx,\yy|\uu}$ are identifiable in the target domain. \par 
In this paper, we represent the variables $\xx$, $\yy$, and $\zz$ in terms of a data-generating process assumption. To enforce \textit{minimal change} of the joint distribution $p_{\xx, \yy, \zz|\uu}$ across domains, we allow only a partition of the latent space to vary over domains and constrain the flexibility of the influence from domain changes.
(1) Under this generating process, we show that both the changing and the invariant parts in the generating process are (partially) identifiable, which gives rise to a principled way of aligning source and target domains. (2) Leveraging the identifiability results, we propose an algorithm that identifies the latent representation and utilizes the representation for optimal prediction for the target domain. (3) We show that our algorithm surpasses state-of-the-art UDA algorithms on benchmark datasets.

\section{Related Work}

Domain adaptation \cite{cai2019learning, Courty17, deng2019cluster, tzeng2017adversarial, wang2019easy, xu2019larger, zhang2017joint, zhang2013domain, zhang2018collaborative, zhang2019domain, mao2021source, stojanov2021domain, wu2022stylealign, eastwood2022sourcefree,tong2022adversarial,zhu2022mind,xu2022cdtrans,berthelot2022adamatch,kirchmeyer2022mapping} is an actively studied area of research in machine learning and computer vision. 
One popular direction is invariant representation learning, introduced by \cite{ganin2014unsupervised}. Specifically, it has been successively refined in several different ways to ensure that the invariant representation $\zz$ contains meaningful information about the target domain. For instance, in \cite{bousmalis2016domain}, $\zz$ was divided into a changing and an invariant part, and a constraint was added that $\zz$ has to be able to reconstruct the original data, in addition to predicting $\yy$ (from the shared part). Furthermore, pseudo-labels have also been utilized in order to match the joint distribution $p_{\zz,\yy}$ across domains, using kernel methods \cite{long2017conditional,long2017deep}. Another approach to ensure more structure in $\zz$ is to assume that the class-conditional distribution $p_{\xx|\yy}$ has a clustering structure and regularize the neural network algorithm to prevent decision boundaries from crossing high-density regions \cite{shu2018dirt,xie2018learning}. In these studies, $p_{\yy}$ was typically assumed to stay the same across domains. Dropping this assumption has been investigated in \cite{wu2019domain, guo2020ltf}. Furthermore, the setting in which the label sets do not exactly match has also been studied \cite{saito2020universal}. While the body of work studying representation learning for domain adaptation is large and extensive, there are still no guarantees that the learned representation will help learn the optimal predictor in the target domain, even in the infinite-sample case. Besides, invariant representation learning also sheds light on adaptation with multiple sources~\cite{xu2018deep, peng2019moment, wang2020learning, li2021t, park2021information}.\par 
Along the line of work of generative models, studies on generative models and unsupervised learning have made headway into better understanding the properties of identifiability of a latent representation which is meant to capture some underlying factors of variation from which the data was generated. Namely, independent component analysis (ICA) is the classical approach for learning a latent representation for which there are identifiability results, where the generating process is assumed to consist of a linear mixing function \cite{comon1994independent,bell1995information,ICAbook}. A major problem in nonlinear ICA is that, without assumption on either source distribution or generating process, the model is seriously unidentifiable \cite{hyvarinen1999nonlinear}. Recent breakthroughs introduced auxiliary variables, e.g., domain indexes, to advance the identifiability results \cite{hyvarinen2019nonlinear, sorrenson2020disentanglement, halva2020hidden, lachapelle2021disentanglement, khemakhem2020variational, von2021self, lu2020invariant}. These works aim to identify and disentangle the components of the latent representation while assuming that all of them are changing across domains. \par 
In the context of self-supervised learning with data augmentation, \citet{von2021self} considered the identifiability of shared part, which stays invariant between views. However, this line of work assumes the availability of paired instances in two domains. In the context of out-of-distribution generalization, \citet{lu2020invariant} extend the identifiability result of iVAE~\cite{khemakhem2020variational} to a general exponential family that is not necessarily factorized. However, this study does not take advantage of the fact that the latent representations should contain invariant information that can be disentangled from the part that corresponds to changes across domains. Most importantly, this study resorts to finding a conditionally invariant sub-part of $\zz$, even though there may be parts of $\zz$ that are not conditionally invariant and yet still relevant for predicting $\yy$.  \par 
In this paper, we make use of realistic assumptions regarding the data-generating process in order to provably identify the changing and invariant aspects of the latent representation in both the source and target domains. In doing so, we drop any parametric assumptions about $\zz$, and we allow both the changing and invariant parts to have predictive information about $\yy$. We show that we can align domains and make predictions in a principled manner, and we present an autoencoder algorithm to solve the problem in practice.

\section{High-level Invariance for Domain Adaptation}
\label{sec:motivation}

\begin{figure}
    \centering
    \begin{tikzpicture}[object/.style={thin,double,<->}]
      \node[obs] (x) {$\mathbf{x}$};%
      \node[obs,right=0.75cm of x] (y) {$\mathbf{y}$};%
     \node[latent,above=0.5cm of x,xshift=-0.75cm] (zc) {$\mathbf{z}_{c}$}; %
     \node[latent,above=0.5cm of x,xshift=0.75cm] (zs) {$\mathbf{z}_{s}$}; %
     \node[obs,above=0.5cm of zs,xshift=-0.75cm] (u) {$\mathbf{u}$}; %
     \node[latent,above=0.5cm of zs,xshift=0.75cm] (zs_tilde) {$\tilde{\mathbf{z}}_{s}$}; %
     \edge {zc,zs} {x} 
     \edge {u,zs_tilde} {zs}
     \edge {zc,zs_tilde}{y}
    \end{tikzpicture}
    \caption{\textbf{The generating process}: The gray shade of nodes indicates that the variable is observable.}
    \label{fig:generating_process}
    \end{figure}
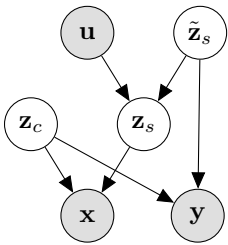
    
    In this section, we introduce our data-generating process in Figure~\ref{fig:generating_process} and Equation~\ref{eq:data_generating_process} and discuss how we could exploit this latent variable model to handle UDA.

    \begin{align} \label{eq:data_generating_process}
        \begin{split}
            \zz_{c} \sim p_{ \zz_{c} }, \; 
            \tilde{\zz}_{s} \sim p_{ \tilde{\zz}_{s} }, \; 
            \zz_{s} = f_{\uu} ( \tilde{ \zz }_{s} ), \;
            \xx = g( \zz_{c}, \zz_{s} ).
        \end{split}
    \end{align}
    
    In the generating process, we assume that data $\xx \in \mathcal{X} $ (e.g.\ images) are generated by latent variables $\zz \in \mathcal{Z} \subseteq \mathbb{R}^{n}$ through an invertible and smooth mixing function $g: \mathcal{Z} \to \mathcal{X}$.
    We denote by $\uu \in \mathcal{U}$ the domain embedding, which is a constant vector for a specific domain.

    We partition latent variables $\zz$ into two parts $ \zz = [ \zz_{c}, \zz_{s} ] $: the invariant part $ \zz_{c} \in \mathcal{Z}_{c} \subseteq \mathbb{R}^{n_{c}}$ (i.e.\ content) of which the distribution stays constant over domain $\uu$'s, and the changing part $ \zz_{s} \in \mathcal{Z}_{s} \subseteq \mathbb{R}^{n_{s}} $ (i.e.\ style) with varying distributions over domains.

    We parameterize the influence of domain $\uu$ on $\zz_{s}$ as a simple transformation of some generic form of the changing part, given by $\tilde{\zz}_s$. Namely, given a component-wise monotonic function $f_{\uu}$, we let $ \zz_{s} = f_{\uu} (\tilde{\zz}_{s}) $. For example, in image datasets, $\zz_{s}$ can correspond to various kinds of backgrounds from the images (sand, trees, sky, etc.). In this case, $\tilde{\zz}_s$ can be interpreted as a generic background pattern that can easily be transformed into a domain-specific image background, depending on which function $f_{\uu}$ is used. \par 
    Further, we assume that $ \yy $ is generated by invariant latent variables $\zz_{c}$ and $\tilde{\zz}_{s}$. Thus, this generating process addresses the conditional-shift setting, in which $ p_{ \xx | \yy, \uu } $ changes across domains, and $p_{ \yy | \uu} = p_{\yy}$ stays the same. 

    We describe below the distinguishing features of our generating process and illustrate how these features are essential to tackling UDA.

    \paragraph{Partitioned latent space.} 
    As discussed in Section~\ref{sec:introduction}, the bulk of prior work \cite{ganin2014unsupervised,ben2010theory,zhao2018adversarial} focuses on learning an invariant representation over domains so that one classifier can be applied to novel domains in the latent space.
    Unlike these approaches, we will demonstrate that our parameterization of $\zz_s$ allows us to preserve the information of the changing part $\zz_{s}$ for prediction instead of discarding this part altogether by imposing invariance.
    Similarly, recent work \cite{lu2020invariant} allows for the possibility that domain changes could influence all latent variables in their framework to address domain shift but discards a subset of them that may be relevant for predicting $\yy$. In contrast, our goal is to disentangle the changing and invariant parts $\zz_s$ and $\zz_c$, capture the relationship between $\zz_s$ and $\yy$ across domains by learning $f_{\uu}$, and use this information to perform prediction in the target domain. \par
    Our parameterization also allows us to implement the minimal change principle by constraining the changing components to be as few as possible. Otherwise, unnecessarily large domain influences (e.g., all components being changing) may lead to loss of semantic information in the invariant $ (\zz_{c}, \tilde{\zz}_{s}) $ space. For instance, in a scenario where each domain comprises digits $0$-$9$ with a specific rotation degree, unconstrained influence may result in a mapping between $1$ and $9$ across domains.
    
    \paragraph{High-level Invariance $\tilde{\zz}_{s}$.} 
    We note that the presence of $\tilde{\zz}_{s}$ is significant, as it allows us to provably learn an optimal classifier over domains without requiring that $ p_{\zz|\uu} $ to be invariant over domains as in previous work~\cite{ganin2014unsupervised,ben2010theory,zhao2018adversarial}.
    With the component-wise monotonic function $f_{\uu}$, we are able to identify $ \tilde{\zz}_{s} $ through $ \zz_{s} $ which is critical to our ability to learn how the changes across domains are related to each other.
    If we know $f_\uu$ and $\zz_s$, we can always map all domains to this high-level $\tilde{\zz}_s$, which will have the desired information to predict $\yy$ (in addition to $\zz_{c}$).
    Additionally, this restrictive function form also contributes to regulating unnecessary changes. 
    
    The problem remains whether it is possible to identify the latent variables of interest (i.e., $ \zz_{c} $, $ \tilde{\zz}_{s} $) with only observational data $ (\xx, \uu)$. Because if this is the case, we can leverage labeled data in source domains to learn a classifier according to $ p_{ \yy | \tilde{\zz}_{s}, \zz_{c} } $ that is universally applicable over given domains.
    In Section~\ref{sec:theory}, we show that we can indeed achieve this identifiability for this generating process and describe how such a model can be learned.
    In Section~\ref{sec:implementation}, we present the corresponding algorithmic implementation based on variational auto-Encoder (VAE)~\cite{kingma2014autoencoding}.

\section{Partial Identifiability of the Latent Variables}
\label{sec:theory}

In this section, we show our theoretical results that serve as the foundation of our algorithm.
In addition to guiding algorithmic design, we highlight that our theory is a novel contribution to the deep generative modeling literature.
In Section~\ref{subsec:changing_components_identifiability} and Section~\ref{subsec:invariant_subspace_identifiability}, we show that with unlabeled observational data $ (\xx, \uu) $ over domains, we can partially identify the latent space.
In Section~\ref{subsec:joint_distribution_identifiability}, we discuss how these identifiability properties, together with labeled source domain data, yield a classifier useful to the target domain. \par
We parameterize $ \zz_{s} = f_{\uu} ( \tilde{\zz}_{s} ) $ with a component-wise monotonic function for each domain $\uu$, and learn a model $ ( \hat{g}, \, p_{\hat{\zz}_{c}}, \, p_{\hat{\zz}_{s}|\uu} ) $ \footnote{
    We use $\hat{\cdot}$ to distinguish estimators from the ground truth.
} that assumes the identical process as in Equation~\ref{eq:data_generating_process} and matches the true marginal data distribution in all domains, that is,
\begin{align} \label{eq:estimated_condition}
    p_{ \xx | \mathbf{u} } (\xx' | \uu') = p_{ \hat{\xx} | \mathbf{u} } (\xx' | \uu'), \; \quad \forall \xx' \in \mathcal{X}, \, \mathbf{u}' \in \mathcal{U},
\end{align}
where the random variable $ \mathbf{x} $ is generated from the true process $ (g, p_{\mathbf{z}_{c}}, p_{\mathbf{z}_{s} | \mathbf{u}}) $ and $ \hat{\mathbf{x}} $ is from the estimated model $ ( \hat{g}, \, p_{\hat{\zz}_{c}}, \, p_{\hat{\zz}_{s} | \mathbf{u} }) $.

In the following, we show that our estimated model can recover the true changing components and the invariant subspace $\zz_{c}$ and then discuss the implications of our theory for UDA.

\subsection{Identifiability of Changing Components} \label{subsec:changing_components_identifiability}

First, we show that the changing components can be identified up to permutation and invertible component-wise transformations.
More formally, for each true changing component $ z_{s, i} $, there exists a corresponding estimated changing component $ \hat{z}_{s, j} $ and an invertible function $h_{s, i}: \mathbb{R} \to \mathbb{R}$, such that $ \hat{z}_{s, j} = h_{s, i} ( z_{s, i} ) $.
For ease of exposition, we assume that the $\zz_{c}$ and $\zz_{s}$ correspond to components in $\zz$ with indices $\{1, \dots, n_{c}\}$ and $\{n_{c}+1, \dots, n\}$ respectively, that is, $ \zz_{c} = ( z_{i} )_{i=1}^{n_{c}} $ and $ \zz_{s} = ( z_{i} )_{i=n_{c}+1}^{n} $.

\newcommand{\wvectorcontent}{%
&\mathbf{w}( \mathbf{z}_{s}, \mathbf{u} ) 
        = \Big(
                \frac{\partial q_{n_{c}+1} \left(z_{n_{c}+1}, \mathbf{u} \right)}{\partial z_{n_{c}+1}}, \ldots, \frac{\partial q_{n} \left(z_{n}, \mathbf{u}\right)}{\partial z_{n}},  && \nonumber \\ 
            & \quad \frac{\partial^{2} q_{n_{c}+1}\left( z_{ n_{c} + 1 }, \mathbf{u}\right)}{\partial z_{ n_{c} + 1 }^{2}}, \ldots, \frac{\partial^{2} q_{n}\left(z_{n}, \mathbf{u}\right)}{\partial z_{n}^{2}}  
        \Big) &&
}

\begin{restatable}{theorem}{stheory} \label{thm:changing_part_identifiability}
We follow the data-generation process in Equation~\ref{eq:data_generating_process} and make the following assumptions:
\begin{itemize}
    \item A1 (\ul{Smooth and Positive Density)}:
        The probability density function of latent variables is smooth and positive, i.e., $ p_{\mathbf{z}|\mathbf{u}}  $ is smooth and $ p_{\mathbf{z}|\mathbf{u}} > 0 $ over $ \mathcal{Z} $ and $ \mathcal{U} $.
    \label{assumption:pdf_conditions}
    
    \item A2 \ul{(Conditional independence)}: Conditioned on $ \mathbf{u} $, each $ z_{i} $ is independent of any other $ z_{j} $ for $ i, j \in [n]$, $ i \neq j $, i.e.\ 
    $ \log p_{\mathbf{z} | \mathbf{u}} (\zz | \uu) = \sum_{i}^{n} q_{i} ( z_{i}, \mathbf{u} ) $ where $q_{i}$ is the log density of the conditional distribution, i.e., $ q_{i} := \log p_{ z_{i} | \mathbf{u} } $.
    \label{assumption:factorizable_prior}
    
    \item A3 \ul{(Linear independence)}: For any $\zz_{s} \in \mathcal{Z}_{s} \subseteq \mathbb{R}^{n_{s}}$, there exist $2n_{s}+1$ values of $\mathbf{u}$, i.e., $\mathbf{u}_{j}$ with $j=0, 1, \dots, 2n_{s}$, such that the $2n_{s}$ vectors $\mathbf{w}(\mathbf{z}_{s}, \mathbf{u}_{j})-\mathbf{w}(\mathbf{z}_{s}, \mathbf{u}_{0})$ with $j=1,...,2{n_s}$, are linearly independent, where vector $\mathbf{w}(\mathbf{z}_{s}, \mathbf{u})$ is defined as follows:
     \begin{flalign} \label{assumption:sufficient_variability_for_changing}
     \wvectorcontent.
    \end{flalign}
\end{itemize}
By learning $ ( \hat{g}, \, p_{\hat{\mathbf{z}}_{c}}, \, p_{ \hat{\zz}_{s} | \mathbf{u} } )$ to achieve Equation~\ref{eq:estimated_condition}, $\zz_{s}$ is component-wise identifiable.
\end{restatable}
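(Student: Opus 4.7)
The plan is to adapt the nonlinear-ICA identifiability strategy of Hyv\"arinen--Morioka (and its refinements in Khemakhem et al., iVAE) to the partitioned latent structure of Equation~\ref{eq:data_generating_process}. First I would turn the distribution-matching assumption (Equation~\ref{eq:estimated_condition}) into a pointwise relation on latent densities: since $g$ and $\hat g$ are invertible and smooth, the composite $h := \hat g^{-1} \circ g$ is a diffeomorphism with $\hat\zz = h(\zz)$, and change of variables yields $\log p_{\zz|\uu}(\zz|\uu) = \log p_{\hat\zz|\uu}(h(\zz)|\uu) + \log|\det J_h(\zz)|$. Plugging in the conditional-independence factorization A2 on both sides gives
\begin{equation*}
\sum_{i=1}^n q_i(z_i,\uu) \;=\; \sum_{k=1}^n \hat q_k(h_k(\zz),\uu) \;+\; \log|\det J_h(\zz)|.
\end{equation*}

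Next I would kill the Jacobian term and the invariant-component contributions simultaneously by differencing across domains. Because $\log|\det J_h(\zz)|$ does not depend on $\uu$ and because the invariant components (indices $\le n_c$, in both the true and estimated models) satisfy $\partial_{\uu} q_i = \partial_{\uu}\hat q_k = 0$, subtracting the identity at $\uu_0$ from the one at $\uu_j$ leaves
\begin{equation*}
\sum_{i=n_c+1}^n\!\! \bigl[q_i(z_i,\uu_j) - q_i(z_i,\uu_0)\bigr] \;=\; \sum_{k=n_c+1}^n\!\! \bigl[\hat q_k(h_k(\zz),\uu_j) - \hat q_k(h_k(\zz),\uu_0)\bigr]
\end{equation*}
for each $j = 1,\dots,2n_s$. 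Differentiating this identity once with respect to $z_m$ ($m>n_c$) expresses $\partial q_m/\partial z_m|_{\uu_j}-\partial q_m/\partial z_m|_{\uu_0}$ as $\sum_{k>n_c}\bigl[\partial_1\hat q_k|_{\uu_j}-\partial_1\hat q_k|_{\uu_0}\bigr]\cdot\partial h_k/\partial z_m$; differentiating a second time in $z_m$ gives the analogous relation for $\partial^2 q_m/\partial z_m^2$, carrying both $(\partial h_k/\partial z_m)^2$ and $\partial^2 h_k/\partial z_m^2$ terms.

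Stacking these $2n_s$ equations (a first- and second-derivative equation per changing index $m$) produces a matrix identity of the form $\mathbf w(\zz_s,\uu_j)-\mathbf w(\zz_s,\uu_0) = M(\zz)\bigl[\hat{\mathbf w}(\hat\zz_s,\uu_j)-\hat{\mathbf w}(\hat\zz_s,\uu_0)\bigr]$, where $\mathbf w$ is the vector defined in A3 and $M$ is a $2n_s\times 2n_s$ matrix built from first and second partials of $h$ on the changing block. Ranging $j$ over $1,\dots,2n_s$ and invoking A3 (linear independence of the $2n_s$ difference vectors on the left) forces $M$ to have full rank. Repeating the derivation, now differentiating the difference identity instead with respect to $z_{c,l}$ for $l\le n_c$ and with respect to mixed pairs $(z_{s,m},z_{s,m'})$, $m\ne m'$, yields homogeneous linear systems whose unique solution under the rank condition is $\partial h_k/\partial z_{c,l}=0$ and $\partial h_k/\partial z_{s,m}\cdot\partial h_k/\partial z_{s,m'}=0$ for every $k>n_c$ and every admissible pair. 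Together with invertibility of $J_h$, these constraints force the $n_s\times n_s$ sub-Jacobian $[\partial h_{n_c+k}/\partial z_{n_c+m}]$ to be a permuted diagonal matrix, which is precisely the statement that each $\hat z_{s,j}$ is a strictly monotonic (hence invertible component-wise) function of a single $z_{s,\sigma(j)}$.

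The main obstacle I anticipate lies in the last step. Assumption A3 is a condition on the \emph{true} densities $q_i$, so the full-rank conclusion sits on the left-hand side of the matrix identity; it has to be parlayed into structural statements about $h$ on the right-hand side without assuming any exponential-family form on $\hat q_k$. In particular, the second-derivative equations produce product-rule cross terms mixing $(\partial h_k/\partial z_m)^2$ with $\partial^2 h_k/\partial z_m^2$, and teasing these apart to conclude that the mixed partials $\partial h_k/\partial z_{s,m}\partial h_k/\partial z_{s,m'}$ vanish is the technical heart of the argument. A careful bookkeeping of the four blocks of $M$ (first-first, first-second, second-first, second-second) together with the invertibility of $h$ should suffice to disentangle them.
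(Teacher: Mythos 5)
Your proposal is correct in substance, but it takes a noticeably different route from the paper's proof, so a comparison is worthwhile. The paper works with $h=g^{-1}\circ\hat g$ (estimated to true) and takes \emph{cross} second-order partials $\partial^2/\partial\hat z_k\partial\hat z_q$, $k\neq q$, of the change-of-variables identity: because the estimated log-density is coordinatewise separable (A2 for the estimated model), all estimated-density terms vanish identically, domain differencing then removes the Jacobian term and the invariant components, and A3 applies \emph{directly} to one linear system whose unknowns are $h'_{i,(k)}h'_{i,(q)}$ and $h''_{i,(k,q)}$ — so the estimated densities never need to be analyzed. You instead keep the estimated derivatives $\hat q_k$, build the matrix identity $\mathbf{w}(\zz_s,\uu_j)-\mathbf{w}(\zz_s,\uu_0)=M(\zz)\,[\hat{\mathbf{w}}(\hat\zz_s,\uu_j)-\hat{\mathbf{w}}(\hat\zz_s,\uu_0)]$, deduce that $M$ is invertible from A3, and only then kill the unwanted partials via content-coordinate and mixed style-coordinate derivatives; this is closer to the classical Hyv\"arinen--Morioka/iVAE argument. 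One point to make explicit when you execute it: the homogeneous systems in your last step have the \emph{estimated} difference vectors $\hat{\mathbf{w}}(\hat\zz_s,\uu_j)-\hat{\mathbf{w}}(\hat\zz_s,\uu_0)$ as coefficients, so concluding that the unknowns vanish requires first transferring sufficient variability to the estimated side, i.e.\ using invertibility of $M$ to show those $2n_s$ vectors span $\mathbb{R}^{2n_s}$ (A3 itself only concerns the true model); with that spelled out, the cross-term "obstacle" you flag dissolves, since the spanning property forces the whole unknown vector (products of first partials together with mixed second partials) to zero at once. Your endgame also differs mildly from the paper's: you obtain $\partial h_k/\partial z_{c,l}=0$ from the derivative system, whereas the paper zeroes the corresponding Jacobian block by noting that the estimated content marginals are domain-invariant while the true style marginals change; both, combined with invertibility of the Jacobian, yield the permuted component-wise invertible structure. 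Net effect: the paper's cross-derivative trick is shorter and avoids reasoning about $\hat q_k$ altogether, while your route costs an extra rank-transfer lemma but makes explicit that the learned model inherits the sufficient-variability property, which is a perfectly sound (and somewhat more standard) way to complete the argument.
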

A proof can be found in Appendix~\ref{sec:changing_part_proof}. We note that all three assumptions are standard in the nonlinear ICA literature~\cite{hyvarinen2019nonlinear,khemakhem2020variational}.
In particular, the intuition of Assumption A3 is that $ p_{ \zz_{s} | \uu } $ varies sufficiently over domain $\uu$'s.
We demonstrate in Section~\ref{sec:synthetic_experiments} that this identifiability can be largely achieved with Gaussian distributions of distinct means and variances.

It is noteworthy that Theorem~\ref{thm:changing_part_identifiability} manifests the benefits of minimal changes - the smaller $n_{s}$, the more easily for the linear independence condition to hold, as we need only $2n_{s}+1$ domains with sufficient variability, whereas prior work often requires $2n+1$ sufficiently variable domains.
We note that in many DA problems, the relations between domains are explicit, so the dimensionality of the intrinsic change is small. 
Our results provide insights into whether a fixed number of domains is sufficient for DA with different levels of change.

The identifiability of the changing part is crucial to our method. Only by recovering the changing components $\zz_{s}$ and the corresponding domain influence $f_{\uu}$ can we correctly align various domains with $\tilde{\zz}_{s}$ and learn a classifier.


\subsection{Identifiability of the Invariant Subspace}

In this section, we show that the invariant subspace is identifiable in a block-wise fashion.
Our goal is to show that the estimated invariant part has preserved information about the true invariant part, and there is no information mixed in from the changing components.
Formally, we would like to show that there exists an invertible mapping $ h'_{c}: \mathcal{Z}_{c} \to \mathcal{Z}_{c} $ between the estimated invariant part $ \hat{\zz}_{c} $ and the true invariant part $\zz_{c}$, s.t.\ $ \hat{\mathbf{z}}_{c} = h'_{c}( \mathbf{z}_{c} ) $.
The idea of block-wise identifiability emerges in independent subspace analysis~\cite{casey2000separation,hyvarinen2000emergence,le2011learning,theis2006towards}, and recent work~\cite{von2021self} addresses it in the nonlinear ICA scenario.

In the following theorem, we show that the block-wise identifiability of the invariant subspace can be attained with the estimated data-generating process.
Note that block-wise identifiability is a weaker notion of identifiability than the component-wise identifiability achieved for the changing components. 
Thus, we only claim \textit{partial identifiability} for the invariant latent variables. \par

\begin{restatable}{theorem}{ctheory} \label{thm:block_identifiability_content}
    We follow the data generation process in Equation~\ref{eq:data_generating_process} and make Assumption A1-A2.
    \\
    In addition, we make the following assumption:
    \begin{itemize}[leftmargin=*] \itemsep0em
        \item A4 \ul{(Domain Variability)}:
        For any set $ A_{\mathbf{z}} \subseteq \mathcal{Z} $ with the following two properties:
        \begin{itemize}
            \item a) $ A_{\mathbf{z}} $ has nonzero probability measure, i.e.\ $ \Pb{ \{ \mathbf{z} \in A_{\mathbf{z}} \} | \{ \mathbf{u} = \mathbf{u}' \} } > 0 $ for any $ \mathbf{u}' \in \mathcal{U}$.
            \item b) $A_{\mathbf{z}}$ cannot be expressed as $ B_{\mathbf{z}_{c}} \times \mathcal{Z}_{s} $ for any $ B_{ \mathbf{z}_{c} } \subseteq \mathcal{Z}_{c}$.
        \end{itemize}          
        $ \exists \mathbf{u}_{1}, \mathbf{u}_{2} \in \mathcal{U} $, such that
        \begin{align*}
            \int_{ \zz \in A_{\mathbf{z}} } p_{\mathbf{z} | \mathbf{u}} ( \mathbf{z} | \mathbf{u}_{1} ) d\zz \neq \int_{ \zz \in A_{\mathbf{z}} } p_{\mathbf{z} | \mathbf{u}} ( \mathbf{z} | \mathbf{u}_{2} ) d\zz.
        \end{align*}
        \label{assumption:sufficient_variability_for_invariant_part}
    \end{itemize}
    With the model $ ( \hat{g}, \, p_{\hat{\zz}_{c}}, \, p_{\hat{\zz}_{s}|\uu} ) $ estimated by observing Equation~\ref{eq:estimated_condition}, the $ \mathbf{z}_{c} $ is block-wise identifiable.  
\end{restatable}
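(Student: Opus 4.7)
The plan is to reduce the claim to a statement about the change-of-variables map between the true and estimated latent spaces. Since both $g$ and $\hat g$ are invertible and induce the same observed marginal in every domain (Equation~\ref{eq:estimated_condition}), the map $h := \hat g^{-1} \circ g : \mathcal Z \to \mathcal Z$ is a well-defined diffeomorphism transporting the true latents $(\zz_c, \zz_s)$ to the estimated latents $(\hat\zz_c, \hat\zz_s)$. Write $h = (h_c, h_s)$ where $h_c$ produces the first $n_c$ coordinates. The theorem is equivalent to asserting that $h_c$ depends only on $\zz_c$, so that $h'_c := h_c(\cdot,\zz_s)$ is well-defined and invertible.

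The core step is to prove that $h_c$ is independent of $\zz_s$ by contradiction, leveraging A4. In the estimated model, $\hat\zz_c$ is, by construction, distributed according to $p_{\hat\zz_c}$, which does not depend on $\uu$; equivalently, for every measurable $A \subseteq \mathcal Z_c$, the probability $\Pb{\hat\zz_c \in A \mid \uu}$ is constant in $\uu$. Pulling this back through $h$, I define $A_{\zz} := h_c^{-1}(A) \subseteq \mathcal Z$, so that
\begin{align*}
\Pb{\hat\zz_c \in A \mid \uu} = \int_{\zz \in A_{\zz}} p_{\zz\mid\uu}(\zz\mid\uu)\, d\zz.
\end{align*}
Now suppose for contradiction that $h_c$ genuinely depends on $\zz_s$. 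Then I claim one can choose $A$ so that $A_{\zz}$ has positive measure under every $\uu$ yet cannot be written in the product form $B_{\zz_c}\times\mathcal Z_s$: non-triviality of the dependence on $\zz_s$ means there exist $\zz_c^\star$ and two distinct values of $\zz_s$ mapped by $h_c$ into different regions, so a suitable small $A$ around one image gives a slice $\{\zz_s : (\zz_c^\star, \zz_s) \in A_{\zz}\}$ that is neither empty nor all of $\mathcal Z_s$; smoothness of $h$ and positivity of the density (A1) then ensure $A_{\zz}$ has positive measure in every domain. Applying A4 to this $A_{\zz}$ yields $\uu_1, \uu_2$ for which the integrals differ, contradicting the $\uu$-independence of $\Pb{\hat\zz_c \in A \mid \uu}$. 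Therefore $h_c$ is a function of $\zz_c$ alone, i.e.\ $\hat\zz_c = h'_c(\zz_c)$.

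It remains to show $h'_c$ is invertible. Surjectivity follows because $h$ is surjective onto $\mathcal Z$ and its first block is $h'_c(\zz_c)$, independent of $\zz_s$, so every $\hat\zz_c \in \mathcal Z_c$ has a preimage. For injectivity, I apply the same argument symmetrically to $h^{-1} = g^{-1} \circ \hat g$: since the true $\zz_c$ is invariant across domains in the true model, the first $n_c$ components of $h^{-1}$ must likewise depend only on $\hat\zz_c$. Composing the two one-sided functional relationships yields a two-sided inverse, so $h'_c$ is a bijection, and smoothness comes for free from smoothness of $h$.

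The main obstacle I expect is the measure-theoretic construction of the set $A_{\zz}$ in the contradiction step: translating the informal statement ``$h_c$ depends non-trivially on $\zz_s$'' into a measurable set $A$ whose preimage $A_{\zz}$ simultaneously (i) has positive conditional probability in every domain and (ii) provably cannot be represented as a cylinder $B_{\zz_c}\times\mathcal Z_s$. This requires carefully using continuity of $h_c$ together with positivity of the latent density (A1) to localize around a point of non-trivial dependence and to rule out degenerate product representations, before A4 can be invoked to close the contradiction.
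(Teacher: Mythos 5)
Your reduction to the map $h=\hat g^{-1}\circ g$ and your contradiction argument showing that $h_c$ cannot depend on $\zz_s$ are sound and essentially the paper's Steps 1--3: the paper formalizes the same idea through an equivalence of statements about cylinder pre-images and a split of the invariance integral into a cylinder part ($T_1=0$) and a non-cylinder part ($T_2\neq 0$ by A4), whereas you apply A4 directly to the pre-image of a small ball around one image point; both work, and your version is arguably leaner. The genuine gap is in your injectivity step. You justify ``the first $n_c$ components of $h^{-1}$ depend only on $\hat\zz_c$'' by ``applying the same argument symmetrically,'' but the symmetric argument would need a domain-variability assumption on the \emph{estimated} latent distribution $p_{\hat\zz|\uu}$ (the input variable of $h^{-1}$), while A4 is stated only for the true $p_{\zz|\uu}$. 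Concretely, if $(h^{-1})_{1:n_c}$ depended on $\hat\zz_s$, your construction yields a non-cylinder set $A_{\hat\zz}$ in the estimated coordinates and the domain-invariant quantity $\Pb{ \{\hat\zz\in A_{\hat\zz}\} | \{\uu=\uu'\} }$; but transporting this back to the true coordinates gives $h^{-1}(A_{\hat\zz})=\{\zz : (h^{-1})_{1:n_c}(h(\zz))\in A\}=A\times\mathcal{Z}_s$, which is exactly a cylinder, so A4 never applies and no contradiction can be generated. Nor can you push A4 forward to the estimated coordinates, since that requires $h$ to map cylinders $B_{\zz_c}\times\mathcal{Z}_s$ to cylinders, which in turn requires precisely the fiberwise bijectivity you are trying to establish --- the argument is circular.

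The conclusion you want is true, but it needs a different (and in fact more elementary) argument, which is the paper's Step 4: having shown $\partial\hat\zz_c/\partial\zz_s=\mathbf{0}$, the Jacobian of the diffeomorphism $h$ is block lower-triangular and nonsingular, so the content-content block $\partial\hat\zz_c/\partial\zz_c$ is nonsingular everywhere and the content-style block of the inverse Jacobian, $\partial\zz_c/\partial\hat\zz_s$, is also zero; hence the content part of $h^{-1}$ depends only on $\hat\zz_c$, and combining the two one-sided relations (together with your correct surjectivity observation) gives that $\zz_c\mapsto\hat\zz_c$ is invertible. Replace your symmetric distributional argument with this Jacobian argument and the proof closes.
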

A proof is deferred to Appendix~\ref{sec:proof_invariant_part}.

Like Assumption A3, Assumption A4 also requires the distribution $ p_{\zz|\uu} $ to change sufficiently over domains.
Intuitively, the chance of having one such subset $A_{\zz}$ on which \textit{all} domain distributions have an equal probability measure is very slim when the number of domains is large, and the domain distributions differ sufficiently.
In Section~\ref{sec:synthetic_experiments}, we verify Theorem~\ref{thm:block_identifiability_content} with Gaussian distributions with varying means and variances. 

We note that in comparison to the identifiability theory in recent work~\citep{von2021self}, our setting is more challenging in the sense that the theory in \citet{von2021self} relies on the pairing of augmented data, whereas ours does not - we only assume the invariant marginal distribution of $p_{\zz_{c}} $ does not vary over domains. 

\subsection{Identifiability of the Joint Distribution in the Shared Space}
\label{subsec:joint_distribution_identifiability}

Equipped with the identifiability of the changing components and the invariant subspace, we now discuss how these can result in the identifiability of the joint distribution $p_{ \xx, \yy | \uu^{\cT} } $ for the target domain $\uu^{\cT}$ with a classifier trained on source domain labels.

Since $g$ is partially invertible and partially identifiable (Theorem~\ref{thm:changing_part_identifiability} and Theorem~\ref{thm:block_identifiability_content}), we can recover $ \zz_{c}, \zz_{s} $ from observation $ \xx $ in any domain, that is, we can partially identify the joint distribution $ p_{ \xx, \zz_{c}, \zz_{s} | \uu } $.
As $ \zz_{s} = f_{\uu} (\tilde{\zz}_{s}) $ and $f_{\uu}$ is constrained to be component-wise monotonic and invertible, we can further identify $ p_{ \xx, \zz_{c}, \tilde{\zz}_{s} | \uu } $.
Also note that the label $\yy$ is conditionally independent of $\xx$ and especially $\uu$ given $\zz_{c}$ and $ \tilde{\zz}_{s} $ as shown in Figure~\ref{fig:generating_process}, which means that $ \zz_{c} $ and $ \tilde{\zz}_{s} $ capture all the information in $\xx$ that is relevant to $\yy$.
Therefore, given $ \zz_{c}, \tilde{\zz}_{s} $ we can make predictions in the target domain with the predictor $ p_{ \yy | \zz_{c}, \tilde{\zz}_{s} } $, and this predictor can be learned on source domains, as $ p_{ \xx, \zz_{c}, \zz_{s} | \uu } $ is identifiable as discussed above.
This is exactly the goal of UDA.
In fact, we can identify the joint distribution $ p_{\xx, \yy | \uu^{\cT}}  $ - with the identified $ p_{ \xx, \zz_{c}, \tilde{\zz}_{s} | \uu^{\cT} } $ for the target domain, we can derive $ p_{ \xx, \yy | \uu^{\cT} } = \int_{\zz_{c}} \int_{\tilde{\zz}_{s}} p_{ \xx, \zz_{c}, \tilde{\zz}_{s} | \uu^{\cT} } \cdot p_{ \yy | \zz_{c}, \tilde{\zz}_{s} } d\zz_{c} d\tilde{\zz}_{s} $.

The intuition is that by resorting to the high-level invariant part $ \tilde{\zz}_{s} $ we are able to account for the influence of the domains and thus obtain a classifier useful to all domains participating in the estimator learning, which includes the target domain in the UDA setup.
This reasoning motivates our algorithm design presented in Section~\ref{sec:implementation}.

\section{Partially-Identifiable VAE for Domain Adaptation}
\label{sec:implementation}

\label{subsec:invariant_subspace_identifiability}
\begin{figure}[th]
    \centering
    \includegraphics[width=0.36\textwidth]{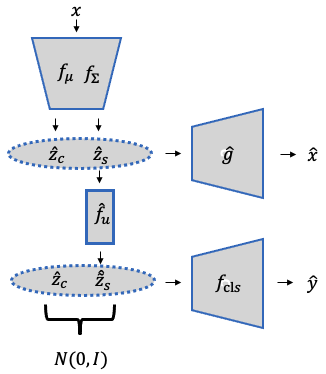}
    \caption{Diagram of our proposed method, \ours. We first apply the VAE encoder $(f_{\mu}, f_{\Sigma})$ to encode $\xx$ into $ (\hat{\zz}_{c}, \hat{\zz}_{s}) $, which is further fed into the decoder $\hat{g}$ for reconstruction. In parallel, the changing part $ \hat{\zz}_{s} $ is passed through the flow model $f_{\uu}$ to recover the high-level invariant variable $\hat{\tilde{\zz}}_{s}$. We use $ (\hat{\zz}_{c}, \hat{\tilde{\zz}}_{s}) $ for classification with the classifier $f_{\text{cls}}$ and for matching $\cN(\mathbf{0}, \mI)$ with a KL loss.}
    \label{fig:diagram}
\end{figure}

In this section, we discuss how to turn the theoretical insights in Section~\ref{sec:theory} into a practical algorithm, \ours\ (\textbf{i}dentifiable \textbf{M}ulti-\textbf{S}ource \textbf{D}omain \textbf{A}daptation), by leveraging expressive deep learning models.
As instructed by the theory, we can estimate the underlying causal generating process in Figure~\ref{fig:generating_process} and recover the ground-truth latent variables up to certain subspaces.
This in turn allows for optimal classification across domains.
In the following, we describe how to estimate each component in the casual generating process ($g$, $ p_{\zz_{s} | \uu} $, $ p_{\zz_{c}} $, $ p_{ \yy | \zz_{c}, \tilde{\zz}_{s} } $) under the VAE framework.
The architecture diagram is shown in Figure~\ref{fig:diagram}.

We use the VAE encoder to obtain the posterior $q( \hat{\zz} | \xx )$, which essentially estimates the inverse of the mixing function $ g^{-1} $. 
Specifically, we parameterize the mean and the covariance matrix diagonal of posterior $ q_{ f_{ \mu }, f_{ \Sigma }  }( \hat{\zz} | \xx) $ with multilayer perceptron's (MLP) $f_{\mu} $ and $ f_{\Sigma} $ respectively (Equation~\ref{eq:posterior}). 
The VAE decoder $ \hat{g} $, which is also an MLP, processes the estimated latent variable $ \hat{\zz} $ and estimates the input data $\xx$ (Equation~\ref{eq:reconstruction}).
\begin{align}
    &\hat{\zz} \sim q_{ f_{ \mu }, f_{ \Sigma }  }( \hat{\zz} | \xx): = \mathcal{N} ( f_{ \mu } ( \xx ), f_{ \Sigma } ( \xx ) ), \label{eq:posterior}
    \\
    &\hat{\xx} = \hat{g} (\hat{\zz}). \label{eq:reconstruction}
\end{align}

As indicated in Figure~\ref{fig:generating_process}, the changing part $\zz_{s}$ of the latent variable $\zz$ contains the information of a specific domain $\uu$, that is, $ \zz_{s}  = f_{\uu} (\tilde{\zz}_{s})$.
To enforce the minimal change property, we assume in Section~\ref{sec:theory} that domain influence is component-wise monotonic.
We estimate the domain influence function $f_{\uu}$ by flow-based architectures~\cite{dinh2017density,huang2018neural,durkan2019neural} for each domain $\uu$, i.e., $ \hat{\zz}_{s} = \hat{f}_{\uu} (\hat{\tilde{\zz}}_{s}) $.
Benefiting from the invertibility of $ \hat{f}_{\uu} $, we can obtain $ \hat{\tilde{\zz}}_{s} $ from $ \hat{\zz}_{s} $ sampled from Equation~\ref{eq:posterior} by inverting the estimated function $ \hat{f}^{-1}_{\uu} $:
\begin{align} \label{eq:inverting_flow}
    \hat{\tilde{\zz}}_{s} = \hat{f}^{-1}_{\uu} (\hat{\zz}_{s}).
\end{align}

Overall, the VAE loss can be employed to enforce the aforementioned generating process:  
\begin{flalign} \label{eq:vae_loss}
    \mathcal{L}_{ \mathrm{VAE} } (f_{\mu}, f_{\Sigma}, \hat{f}_{\uu}, \hat{g}) 
    &= \mathbb{E}_{
            (\mathbf{x}, \mathbf{u})
        } \; \mathbb{E}_{
                \tilde{\mathbf{z}} \sim q_{f_{\mu}, f_{\Sigma}} 
            } \frac{1}{2} \| \xx - \hat{\xx} \|^{2} && \\\nonumber
            & \quad + \beta \cdot 
                D ( q_{ f_{\mu}, f_{\Sigma}, \hat{f}_{\uu} } ( \hat{\tilde{\zz}} | \xx ) || p ( \tilde{\zz} ) )
            , &&
\end{flalign}
where $\beta$ is a control factor introduced in $\beta$-VAE~\cite{higgins2016beta}.
We use $ q_{ f_{\mu}, f_{\Sigma}, \hat{f}_{\uu} } ( \hat{\tilde{\zz}} | \xx ) $ to denote the posterior of $ \tilde{\zz} $ which can be obtained by executing Equation~\ref{eq:posterior} and Equation~\ref{eq:inverting_flow}. 
We choose $ p ( \tilde{\mathbf{z}} ) $ as a standard Gaussian distribution $\mathcal{N} ( \mathbf{0}, \mathbf{I} ) $, following the independence assumption (Assumption~\ref{assumption:factorizable_prior}).
The KL divergence in Equation~\ref{eq:vae_loss} is tractable, thanks to the Gaussian form of $ q_{f_{\mu}, f_{\Sigma}} $ (Equation~\ref{eq:posterior}) and the tractability of the Jacobian determinant of the flow model $ \hat{f}_{\uu} $.

Lastly, we enforce the causal relation between $\yy$ and $ (\zz_{c}, \tilde{\zz}_{s}) $ in our estimated generating process (Figure~\ref{fig:generating_process}).
This can be implemented as a classification branch: 
\begin{align} \label{eq:estimate_y}
    \hat{\yy} = f_{\mathrm{cls}} (\hat{\zz}_{c}, \hat{\tilde{\zz}}_{s}),
\end{align}
where $ f_{\mathrm{cls}} $ is parameterized by a MLP. 
In sources domains, we directly optimize a cross-entropy loss:
\begin{align} \label{eq:cross_entropy}
    \cL_{ \mathrm{cls} } ( f_{ \mathrm{cls} }, f_{\mu}, f_{\Sigma}, \hat{f}_{\uu}, \hat{g} ) = -\mathbb{E}_{ \hat{\mathbf{z}}_{c}, \hat{ \mathbf{z} }_{s}, \, \yy } \Big[ \yy \cdot \log \left( \hat{\yy} \right) \Big],
\end{align}
where $\yy$ are one-hot ground-truth label vectors available in the source domains.
In the target domain, we enforce this relation by maximizing the mutual information $ \mI( (\hat{\zz}_{c}, \hat{\tilde{\zz}}_{s}); \hat{\yy} ) $ which amounts to minimizing the conditional entropy $ \mH (\hat{\yy} | \hat{\zz}_{c}, \hat{\tilde{\zz}}_{s})$~\citep{wang2020learning,li2021t}.
Doing so facilitates learning a latent representation space adhering to the generating process.
\begin{align} \label{eq:entropy}
    \cL_{\mathrm{ent}} ( f_{ \mathrm{cls} }, f_{\mu}, f_{\Sigma}, \hat{f}_{\uu}, \hat{g} ) = -\mathbb{E}_{ \hat{\mathbf{z}}_{c}, \hat{ \mathbf{z} }_{s} } \Big[ \hat{\yy} \cdot \log \left( \hat{\yy} \right) \Big].
\end{align}

\begin{algorithm}[h]
\caption{Training \ours}\label{alg:training}
\begin{algorithmic}
\REQUIRE $(\xx, \yy, \uu)$ from source domains and $ ( \xx, \uu ) $ from the target domain.
\REQUIRE $ f_{\mu}$, $f_{\Sigma}$, $\hat{g}$, $\hat{f}_{\uu}$, and $ f_{\mathrm{cls}} $.
    \STATE 1. get the latent representation $ \hat{\zz} \sim q_{ f_{ \mu }, f_{ \Sigma }  }( \hat{\zz} | \xx) $ in Equation~\ref{eq:posterior};
    \STATE 2. reconstruct the observation $ \hat{\xx} = \hat{g} (\hat{\zz}) $ in Equation~\ref{eq:reconstruction};
    \STATE 3. recover the high-level invariant part $ \hat{\tilde{\zz}}_{s} = \hat{f}_{\uu}^{-1} (\hat{\zz}) $ in Equation~\ref{eq:inverting_flow};
    \STATE 4. estimate the class label $ \hat{\yy} = f_{\mathrm{cls}} (\hat{\zz}_{c}, \hat{\tilde{\zz}}_{s}) $ in Equation~\ref{eq:estimate_y};
    \STATE 5. compute and minimize $ \cL $ according to Equation~\ref{eq:vae_loss}, Equation~\ref{eq:entropy}, Equation~\ref{eq:cross_entropy}, and Equation~\ref{eq:objective}.
\end{algorithmic}
\end{algorithm}

Overall, our loss function is
\begin{align} \label{eq:objective}
    \mathcal{ L } ( f_{ \text{cls} }, \hat{f}_{\mu}, f_{\Sigma}, f_{\mathbf{u}}, \hat{g} ) 
    = \mathcal{ L }_{\text{cls}} + \alpha_1 \mathcal{ L }_{\text{ent}} + \alpha_2 \mathcal{L}_{\text{VAE}},
\end{align}
where $\alpha_1$, $\alpha_2$ are hyper-parameters that balance the loss terms. 
The complete procedures are shown in Algorithm~\ref{alg:training}.

\section{Experiments on Synthetic Data}
\label{sec:synthetic_experiments}

In this section, we present synthetic data experiments to verify Theorem~\ref{thm:changing_part_identifiability} and Theorem~\ref{thm:block_identifiability_content} in practice.

\subsection{Experimental Setup}

\paragraph{Data generation.}
We generate synthetic data following the generating process of $\xx$ in Equation~\ref{eq:data_generating_process}.
We work with latent variables $\zz$ of $4$ dimensions with $n_{c} = n_{s} = 2$.
We sample $\zz_{c} \sim \mathcal{N}(\mathbf{0}, \mathbf{I})$ and $ \zz_{s} \sim \mathcal{N}( \mu_{\uu}, \sigma^{2}_{\uu} \mathbf{I} ) $ where for each domain $\uu$ we sample $ \mu_{\uu} \sim \mathrm{Unif}( -4, 4 ) $ and $ \sigma^{2}_{\uu} \sim \mathrm{Unif}( 0.01, 1 ) $.
We choose $g$ to be a $2$-layer MLP with the Leaky-ReLU activation function \footnote{
    Our experimental design closely follows the practice employed in prior nonlinear ICA work~\cite{hyvarinen2016unsupervised,hyvarinen2019nonlinear}.
}.

\paragraph{Evaluation metrics.}
To measure the component-wise identifiability of the changing components, we compute Mean Correlation Coefficient (MCC) between $\zz_{s}$ and $\hat{\zz}_{s}$ on a test dataset.
MCC is a standard metric in the ICA literature. A higher MCC indicates a higher extent of identifiability, and MCC reaches $1$ when latent variables are perfectly component-wise identifiable.
To measure the block-wise identifiability of the invariant subspace, we follow the experimental practice of the work~\cite{von2021self} and compute the $R^2$ coefficient of determination between $\zz_{c}$ and $\hat{\zz}_{c}$, where $R^2=1$ suggests there is a one-to-one mapping between the two.
As iVAE and $\beta$-VAE do not specify the invariant and the changing subspace in their estimation. At each of their evaluation, we test all possible partitions of their estimated latent variables and report the one with the highest overall score (Avg.).  
We repeat each experiment over $3$ random seeds.

\subsection{Results and Discussion.}
From Table~\ref{tab:synthetic_experiments}, we can observe that both $R^{2}$ and MCC of our method grow along with the number of domains with the peak at $d=9$ where both the invariant and the changing part attain high identifiability scores.
This confirms our theory and the intuition that a certain number of domains are necessary for identifiability.
Notably, we can observe that decent identifiability can be attained with a relatively small number of domains (e.g. $5$), which sheds light on why \ours would work even with only a moderate number of domains.
We visualize the disentanglement in Figure~\ref{fig:synthetic_scatterplots}.

\begin{figure}[t]
     \centering
     \includegraphics[width=0.5\textwidth]{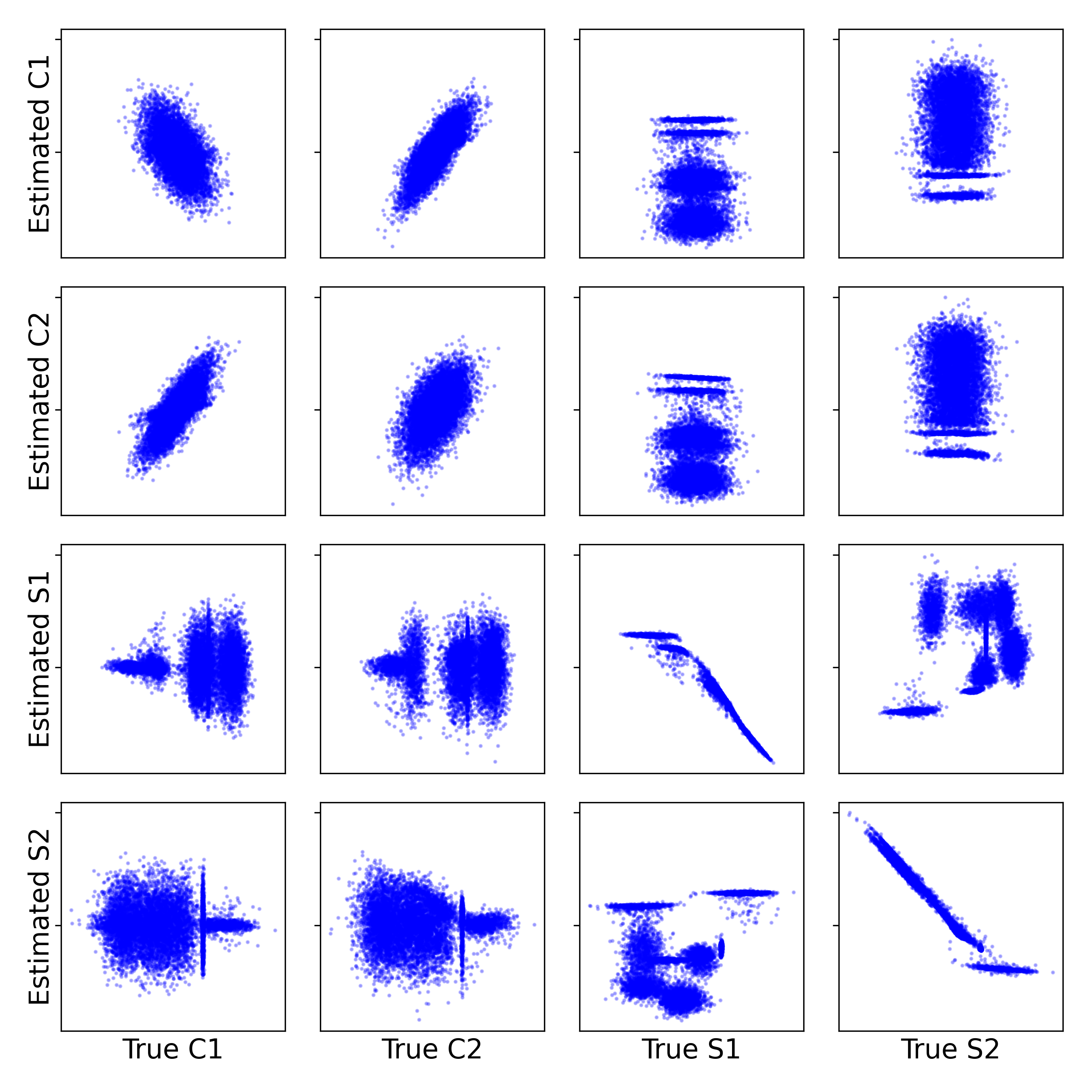}
    \caption{
        \textbf{The scatter plot for the true and the estimated components from our method trained with 9 domains.} 
        The $y$ (resp. $x$) axe of each subplot corresponds to a specific estimated (resp. true) latent variable.
        We can observe that the changing components can be identified in a component-wise manner (subplots "Estimated S" and "True S"), which verifies our Theorem~\ref{thm:changing_part_identifiability}.
        The true invariant components can be (partial) identified within its own subspace (subplots "Estimated C" and "True C") while influencing the changing components minimally, adhering to Theorem~\ref{thm:block_identifiability_content}.
    }
    \label{fig:synthetic_scatterplots}
\end{figure}


\begin{table}[t]
    \centering
    \resizebox{\columnwidth}{!}{%
    \begin{tabular}{c|cccc}
        \hline
         & ours ($d=3$) & ours ($d=5$) & ours ($d=7$) & ours ($d=9$)  \\
        \hline
        MCC & $ 0.67 \pm 0.12 $ & $ 0.80 \pm 0.15 $ & $ 0.90 \pm 0.12 $ & $\mathbf{ 0.91  \pm 0.09 } $ \\ 
        $R^{2}$ & $ 0.59 \pm 0.05 $ &  $ 0.74 \pm 0.09 $ & $ 0.84 \pm 0.05 $ & $\mathbf{ 0.85 \pm 0.06} $  \\
        Avg. & $ 0.63 \pm 0.08$ & $ 0.77 \pm 0.12 $ & $ 0.87 \pm 0.09 $ & $ \mathbf{0.88 \pm 0.08 }$ \\
        \hline
    \end{tabular}%
    }
    \caption{\text{Identifiability on synthetic data}: $d$ denotes the number of domains and Avg. stands for the average of MCC and $R^{2}$.}
    \label{tab:synthetic_experiments}
\end{table}

\section{Experiments on Real-world Data}
\label{sec:real_world_experiments}

\begin{table*}[]
    \centering
    \begin{tabular}{c|cccc|c}
    \hline
         Methods&$\rightarrow$ Art& $\rightarrow$ Cartoon& $\rightarrow$ Photo& $\rightarrow$ Sketch&Avg  \\
         \hline
         Source Only \cite{he2016deep} & 74.9 $ \pm $ 0.88& 72.1$ \pm $0.75 &94.5$ \pm $0.58 &64.7$ \pm $1.53 &76.6\\
    DANN \cite{ganin2016domain} & 81.9$ \pm $1.13 &77.5$ \pm $1.26 &91.8$ \pm $1.21 &74.6$ \pm $1.03 &81.5\\
    MDAN \cite{zhao2018adversarial}& 79.1$ \pm $0.36 &76.0$ \pm $0.73 &91.4$ \pm $0.85 &72.0$ \pm $0.80 &79.6\\
    WBN \cite{mancini2018boosting}& 89.9$ \pm $0.28 &89.7$ \pm $0.56 &97.4$ \pm $0.84 &58.0$ \pm $1.51 &83.8\\
    MCD \cite{saito2018maximum}& 88.7$ \pm $1.01 &88.9$ \pm $1.53 &96.4$ \pm $0.42 &73.9$ \pm $3.94 &87.0\\
    M3SDA \cite{peng2019moment} & 89.3$ \pm $0.42 &89.9$ \pm $1.00 &97.3$ \pm $0.31 &76.7$ \pm $2.86 &88.3\\
        CMSS \cite{yang2020curriculum}&  88.6 $ \pm $0.36& 90.4$ \pm $ 0.80 &96.9$ \pm $0.27 &82.0$ \pm $0.59 &89.5\\
        LtC-MSDA \cite{wang2020learning} &90.19 &90.47&97.23 &81.53  &89.8\\
         T-SVDNet \cite{li2021t} &90.43 &90.61&\textbf{98.50} &85.49  &91.25\\
         iMSDA (Ours) & $\mathbf{93.75}\pm 0.32$ & $\mathbf{92.46}\pm 0.23$ & $98.48\pm 0.07$ &  $\mathbf{89.22}\pm 0.73$ &  \textbf{93.48} \\ \hline
    \end{tabular}
    \caption{Classification results on PACS. We employ Resnet-18 as our encoder backbone. Most baseline results are taken from \cite{yang2020curriculum}. We choose $\alpha_{1}=0.1$ and $\alpha_{2}=5e-5$. The latent space is partitioned with $n_{s}=4$ and $n=64$.}
    \label{tab:pacs}
\end{table*}

\begin{table*}[]
    \centering
    \begin{tabular}{c|cccc|c}
    \hline
  Models & $\rightarrow$ Art& $\rightarrow$ Clipart & $\rightarrow$ Product & $\rightarrow$ Realworld &Avg\\ \hline
Source Only \cite{he2016deep}  &64.58$ \pm $0.68 &52.32$ \pm $0.63 &77.63$ \pm $0.23 &80.70$ \pm $0.81 &68.81\\
DANN \cite{ganin2016domain} &64.26$ \pm $0.59 &58.01$ \pm $1.55 &76.44$ \pm $0.47 &78.80$ \pm $0.49 &69.38\\
DANN+BSP \cite{chen2019transferability} & 66.10$ \pm $0.27 &61.03$ \pm $0.39 &78.13$ \pm $0.31 &79.92$ \pm $0.13 &71.29\\
DAN \cite{long2015learning} &68.28$ \pm $0.45 &57.92$ \pm $0.65 &78.45$ \pm $0.05 &81.93$ \pm $0.35 &71.64\\
MCD \cite{saito2018maximum} &67.84$ \pm $0.38 &59.91$ \pm $0.55 &79.21$ \pm $0.61 &80.93$ \pm $0.18 &71.97\\ \hline
M3SDA \cite{peng2019moment} &66.22$ \pm $0.52 &58.55$ \pm $0.62 &79.45$ \pm $0.52 &81.35$ \pm $0.19 &71.39\\
DCTN \cite{xu2018deep} &66.92$ \pm $0.60 &61.82$ \pm $0.46 &79.20$ \pm $0.58 &77.78$ \pm $0.59 &71.43\\
MIAN-$\gamma$ \cite{park2021information}&69.88$ \pm $0.35 &\textbf{64.20$ \pm $0.68} &80.87$ \pm $0.37 &81.49$ \pm $0.24 &74.11 \\
iMSDA (Ours) & $\mathbf{75.4} \pm 0.86$ & $61.4 \pm 0.73$ & $\mathbf{83.5} \pm 0.22$ & $\mathbf{84.47} \pm 0.38$ & $\mathbf{76.19}$ \\ \hline
    \end{tabular}
    \caption{Classification results on Office-Home. We employ Resnet-50 as our encoder backbone. Baseline results are taken from \cite{park2021information}. We choose $\alpha_{1}=0.1$ and $\alpha_{2}=1e-4$. The latent space is partitioned with $n_{s}=4$ and $n=128$.}
    \label{tab:office}
\end{table*}

\begin{table}
    \centering 
    \resizebox{\columnwidth}{!}{%
    \begin{tabular}[h]{ |c | c c c | }
        \hline
            & $\cL_{\text{cls}}$ & $\cL_{\text{cls}} + \alpha_{1} \cL_{\text{ent}}$ & $\cL$ \\
        \hline
        $\rightarrow$P & $97.2 \pm 0.01$ & $98.4 \pm 0.26$ & $ 98.48 \pm 0.07 $ \\
        \hline 
        $\rightarrow$A & $81.1 \pm 0.73$ & $93.44 \pm 0.31$ & $ 93.75 \pm 0.32 $ \\ 
        \hline
        $\rightarrow$C & $79.2 \pm 0.17$ & $ 89.03 \pm 2.4$& $ 92.46 \pm 0.23 $ \\
        \hline 
        $\rightarrow$S & $ 70.9 \pm 0.69$ & $87.29 \pm 0.8$ & $ 89.22 \pm 0.73 $ \\
        \hline 
        Avg. & $82.11$ & $92.01$ & $ 93.48 $ \\
        \hline
    \end{tabular}
    }%
    \caption{Ablation study on PACS dataset. We study the impact of individual loss terms. Except for the ablated terms, the hyper-parameters are identical to those in Table~\ref{tab:pacs}.}
    \label{tab:ablation_loss_pacs}
\end{table}

\begin{table}
    \centering 
    \resizebox{\columnwidth}{!}{%
    \begin{tabular}[h]{ |c | c c c | }
        \hline
            & $\cL_{\text{cls}}$ & $\cL_{\text{cls}} + \alpha_{1} \cL_{\text{ent}}$ & $\cL$ \\
        \hline
        $\rightarrow$Ar & $64.57 \pm 0.04$ & $74.23 \pm 0.66$ & $ 75.4 \pm 0.86 $ \\
        \hline 
        $\rightarrow$Pr & $77.77 \pm 0.03$ & $82.83 \pm 0.12$ & $ 83.5 \pm 0.22 $ \\ 
        \hline
        $\rightarrow$Cl & $47.17 \pm 0.20$ & $ 61.8 \pm 0.1$& $ 61.4 \pm 0.73 $ \\
        \hline 
        $\rightarrow$Rw & $ 79.03 \pm 0.08$ & $84.2 \pm 0.14$ & $ 84.47 \pm 0.38 $ \\
        \hline 
        Avg. & $67.2$ & $75.77$ & $ 76.19 $ \\
        \hline
    \end{tabular}
    }%
    \caption{Ablation study on Office-Home dataset. We study the impact of individual loss terms. Except for the ablated terms, the hyper-parameters are identical to those in Table~\ref{tab:office}.}
    \label{tab:ablation_loss_office_home}
\end{table}

\begin{table}
    \centering
    \begin{tabular}[h]{c| ccc }
        \hline
         & $ n_{s} = 8 $ & $ n_{s} = 4 $ & $n_{s} = 2$  \\
        \hline
        $\rightarrow$P & $98.40 \pm 0.1$ & $ 98.48 \pm 0.07$ & $98.34 \pm 0.07$ \\
        \hline 
        $\rightarrow$A & $ 92.15 \pm 1.74$ & $ 93.75 \pm 0.32 $ & $93.86 \pm 0.46$ \\
        \hline 
        $\rightarrow$C & $ 91.48 \pm 1.54 $ & $92.46 \pm 0.23$  & $92.18 \pm 0.34$\\
        \hline
        $\rightarrow$S & $86.15 \pm 5.2 $ & $ 89.22 \pm 0.73 $ & $	86.25 \pm 3.05$   \\
        \hline
        Avg   & $92.05 \pm 2.14 $ & $93.48 \pm 0.34 $ & $92.66 \pm 0.98$\\
        \hline
    \end{tabular}
    \caption{The impact of the changing dimension $n_{s}$: we test several values of $n_{s}$ on PACS dataset. The other configurations are identical to those in Table~\ref{tab:pacs}. }
    \label{tab:ablation_changing_components_pacs}
\end{table}

\subsection{Experimental Setup}
\paragraph{Datasets.}
We evaluate our proposed iMSDA on two benchmark domain adaptation datasets, namely Office-Home \cite{venkateswara2017deep} and
PACS \cite{li2017deeper}. 
Please see Appendix~\ref{sec:real_world_setup} for a detailed description.
In experiments, we designate one domain as the target domain and use all other domains as source domains.

\paragraph{Baselines.}
We compare our approach with state-of-the-art
methods to verify its effectiveness.  We compare with Source Only and single-source domain adaptation methods: DANN \cite{ganin2016domain}, MCD \cite{saito2018maximum}, DANN+BSP \cite{chen2019transferability}. We also compare our method with existing multi-source domain adaptation methods, including M3SDA \cite{peng2019moment}, CMSS \cite{yang2020curriculum}  LtC-MSDA \cite{wang2020learning}, MIAN-$\gamma$ \cite{park2021information} and T-SVDNet \cite{li2021t}.
Note that when using single-source adaptation methods, we pool all sources together as a single source and then apply the methods.
We defer implementation details to Appendix~\ref{sec:real_world_setup}.

\subsection{Results and Discussion}

\paragraph{PACS.}  
The results for PACS are presented in Table~\ref{tab:pacs}. 
We can observe that for the majority of the transfer directions, \ours\ outperforms the most competitive baseline by a considerable margin of $1.2 \%$ - $ 3 \% $. For the $\rightarrow$Phone task where it does not, the performance is within the margin of error compared to the strongest algorithm T-SVDNet.
Notably, when compared with the currently proposed T-SVDNet \cite{li2021t}, \ours\ achieves a significant performance gain on the challenging task $\rightarrow$Sketch. 
To aid understanding, we visualize the learned features in Figure~\ref{fig:tsne} (Appendix~\ref{sec:real_world_setup}), which shows that \ours\ can effectively align source and target domains while retaining discriminative structures.

\paragraph{Office-Home.} 
The results in Table \ref{tab:office} demonstrate that the superior performance of \ours\ on the Office-Home dataset.
\ours\ achieves a margin of roughly $3\%$ over other baselines on $3$ out of $4$ tasks, with an exception for $\rightarrow$Clipart.
Although \ours\ loses out to MIAN-$\gamma$ for the $\rightarrow$Clipart task, it is still on par with or superior to other baselines.

\subsection{Ablation studies}

\paragraph{The impact of individual loss term.}
Table~\ref{tab:ablation_loss_pacs} and Table~\ref{tab:ablation_loss_office_home} contain the ablation results for individual loss terms.
We can observe that $\cL_{\text{ent}}$ greatly boosts the model performance in both cases (around $10\%$ and $8.5\%$).
This shows the directly enforcing the relation between $(\hat{\zz}_{c}, \hat{\tilde{\zz}}_{s}) $ and $\yy$ is an effective solution if the change is insignificant over domains.
The inclusion of the alignment consistently benefits the model performance in each case, which justifies our theoretical insights and design choice.

\paragraph{The impact of the changing part dimension.}
Table~\ref{tab:ablation_changing_components_pacs} includes results on various numbers of changing components $n_{s}$.
We can observe the model performance degrades at both large $n_{s}$ (i.e.\ $8$) and small $n_{s}$ (i.e.\ $2$) choices.
This is coherent with our hypothesis: a large $n_{s}$ compromises the minimal change principle, leading to the undesirable consequences as discussed in Section~\ref{sec:motivation};
on the contrary, an overly small $n_{s}$ could prevent the model from handling the domain change, for lack of changing capacity, which also yields suboptimal performance.

\section{Conclusion}

It is not uncommon to assume observations of the real world are generated from high-level latent variables and thus the ill-posedness in the problem of UDA can be reduced to obtaining meaningful reconstructions of those latent variables and mapping distinct domains to a shared space for classification. \par
In this work, we show that under reasonable assumptions on the data-generating process, as well as leveraging the principle of minimality, we can obtain partial identifiability of the changing and invariant parts of the generating process. In particular, by introducing a high-level invariant latent variable that influences the changing variable and the corresponding label across domains, we show identifiability of the joint distribution $p_{ \xx, \yy | \uu^{\cT} } $ for the target domain $\uu^{\cT}$ with a classifier trained on source domain labels. Our proposed VAE combined with a flow model architecture learns disentangled representations that allow us to perform multi-source UDA with state-of-the-art results across various benchmarks. 

\paragraph{Acknowledgment}
We thank the anonymous reviewers for their constructive comments. This work was supported in part by the NSF-Convergence Accelerator Track-D award \#2134901 and by the National Institutes of Health (NIH) under Contract R01HL159805, and by a grant from Apple. The NSF or NIH is not responsible for the views reported in this paper.


\bibliography{references}
\bibliographystyle{configurations/icml2022}

\newpage
\appendix
\onecolumn

\section{Appendix}

\subsection{Proof of the changing part identifiability} \label{sec:changing_part_proof}
We present the proof of the changing part identifiability (i.e., Theorem~\ref{thm:changing_part_identifiability}) in this section.

\renewcommand{\wvectorcontent}{%
    \mathbf{w}( \mathbf{z}_{s}, \mathbf{u} ) 
    = \Bigl(
        \frac{\partial q_{n_{c}+1} \left( z_{ n_{c} + 1 }, \mathbf{u} \right) }{ \partial z_{ n_{c} + 1 } }, \ldots, \frac{ \partial q_{n} \left( z_n, \mathbf{u} \right) }{ \partial z_{n} },
        \frac{ \partial^{2} q_{ n_{c} + 1 } \left( z_{ n_{c} + 1 }, \mathbf{u} \right) }{ \partial z_{n_{c} + 1}^{2} }, \ldots, \frac{ \partial^{2} q_{ n } \left( z_{ n }, \mathbf{u} \right) }{ \partial z_{n}^{2} }%
    \Bigr)
}

\stheory*

\begin{proof}

We start from the matched marginal distribution condition (i.e., Equation~\ref{eq:estimated_condition}) to develop the relation between $ \mathbf{z} $ and $ \hat{\mathbf{z}} $ as follows: $ \forall \mathbf{u} \in \mathcal{U} $,
\begin{align} \label{eq:marginal_matching_consenquence}
    p_{ \hat{\mathbf{x}} | \mathbf{u} } = p_{ \mathbf{x} | \mathbf{u} }
    \iff
    p_{ \hat{g} (\hat{\mathbf{z}}) | \mathbf{u} } = p_{ g (\mathbf{z}) | \mathbf{u} }
    \iff p_{ g^{-1} \circ \hat{g}(\hat{\mathbf{z}}) | \mathbf{u} } | \mathbf{J}_{ g^{-1} } | = p_{ \zz | \mathbf{u} } | \mathbf{J}_{ g^{-1} } |
    \iff p_{ h(\hat{\mathbf{z}}) | \mathbf{u} } = p_{ \mathbf{z} | \mathbf{u} },
\end{align}
where $ \hat{g}^{-1}: \mathcal{Z} \to \mathbf{\mathcal{Z}} $ denotes the estimated invertible generating function,  and $h:= g^{-1} \circ \hat{g} $ is the transformation between the true latent variable and the estimated one.
$|\mathbf{J}_{g^{-1}}|$ stands for the absolute value of Jacobian matrix determinant of $ g^{-1} $.
Note that as both $ \hat{g}^{-1} $ and $ g $ are invertible, $ |\mathbf{J}_{g^{-1}}| \neq 0 $ and $h $ is invertible.


According to the independence relations in the data-generating process in Equation \ref{eq:data_generating_process} and A2, we have
\begin{align*}
    p_{\mathbf{z} | \mathbf{u}} (\mathbf{z} | \mathbf{u}) = \prod_{i=1}^{n} p_{z_{i} | \mathbf{u}} (z_{i}) ; \qquad
    p_{\hat{\mathbf{z}} | \mathbf{u}} ( \hat{\mathbf{z}} | \mathbf{u}) = \prod_{i=1}^{n} p_{\hat{z}_{i} | \mathbf{u}} (\hat{z}_{i})
\end{align*}
Rewriting with the notation $ q_{i}:= \log p_{ z_{i} | \mathbf{u} } $ and $ \hat{q}_{i}:= \log p_{ \hat{z}_{i} | \mathbf{u} }  $ yields
\begin{align*}
    \log p_{ \mathbf{z} | \mathbf{u} }(\mathbf{z} | \mathbf{u}) =  \sum_{i=1}^{n} q_{i} (z_{i},\mathbf{u}) ; \qquad
    \log p_{ \hat{\mathbf{z}} | \mathbf{u} }( \hat{\mathbf{z}} | \mathbf{u}) =  \sum_{i=1}^{n} \hat{q}_{i} (\hat{z}_{i},\mathbf{u})
\end{align*}

Applying the change of variables to Equation~\ref{eq:marginal_matching_consenquence} yields 
\begin{align} \label{eq:log_transform}
    p_{\mathbf{z}| \mathbf{u}} = p_{ \hat{\zz} | \mathbf{u} } \cdot |\mathbf{J}_{h^{-1}}|
    \iff
    \sum_{i=1}^{n} q_{i} (z_{i},\mathbf{u}) + \log|\mathbf{J}_{h}| = \sum_{i=1}^{n} \hat{q}_{i} (\hat{z}_{i},\mathbf{u})
\end{align}
where $\mathbf{J}_{h^{-1}}$ and $\mathbf{J}_{h}$ are the Jacobian matrix of the transformation associated with $h^{-1}$ and $h$ respectively.

For ease of exposition, we adopt the following notation:
\begin{align*}
    h'_{i, (k)} := \frac{\partial z_{i}}{\partial \hat{z}_{k}},
    \qquad & h''_{i, (k, q)} := \frac{\partial^2 z_{i}}{\partial \hat{z}_{k} \partial \hat{z}_{q}}; \\
    \eta'_{i} ( z_{i}, \mathbf{u} ) := \frac{
    \partial q_{i} ( z_{i},\mathbf{u})
    }{
        \partial z_{i}
    }, 
    \qquad & \eta''_{i} ( z_{i}, \mathbf{u} ) := \frac{
    \partial^{2} q_{i} ( z_{i},\mathbf{u})
    }{
        (\partial z_{i})^{2}
    }.
\end{align*}

Differentiating both sides of Equation~\ref{eq:log_transform} twice w.r.t.\ $ \hat{z}_{k} $ and $ \hat{z}_{q} $ where $ k, q \in [n] $ and $ k \neq q $ yields 

\begin{align} \label{eq:single_differential_equation}
    \sum_{i=1}^{n} \Big( 
        \eta''_{i} ( z_{i}, \mathbf{u} ) \cdot h'_{i, (k)} h'_{i, (q)}  
        + \eta'_{i} ( z_{i}, \mathbf{u} ) \cdot  h''_{i, (k,q)} 
    \Big) + \frac{\partial^2 \log|\mathbf{J}_{h}|}  {\partial \hat{z}_{k} \partial \hat{z}_{q} } = 0.
\end{align}
Therefore, for $ \mathbf{u} = \mathbf{u}_{0}, \dots, \mathbf{u}_{2n_{s}} $, we have $ 2 n_{s} + 1 $ such equations. 
Subtracting each equation corresponding to $ \mathbf{u}_{1}, \dots, \mathbf{u}_{2n_{s}} $ with the equation corresponding to $ \mathbf{u}_{0} $ results in $ 2n_{s} $ equations:
\begin{align}
    \sum_{i=n_{c}+1}^{n} \Big( 
       (\eta''_{i} ( z_{i}, \mathbf{u}_{j} ) - \eta''_{i} ( z_{i}, \mathbf{u}_{0} ) ) \cdot h'_{i, (k)} h'_{i, (q)}  
        + ( \eta'_{i} ( z_{i}, \mathbf{u}_{j} ) - \eta'_{i} ( z_{i}, \mathbf{u}_{0} ) ) \cdot  h''_{i, (k,q)} 
    \Big) = 0,
\end{align}
where $ j = 1, \dots 2 n_{s} $.
Note that as the content distribution is invariant to domains i.e., $ p_{\mathbf{z}_{c}} = p_{\mathbf{z}_{c} | \mathbf{u}} $, we have $ \eta''_{i} ( z_{i}, \mathbf{u}_{j} ) = \eta''_{i} ( z_{i}, \mathbf{u}_{j'} ) $ and $ \eta'_{i} ( z_{i}, \mathbf{u}_{j} ) = \eta'_{i} ( z_{i}, \mathbf{u}_{j'} ), \forall j,  j' $. 
Hence only the style components $ i = n_{c} + 1, \dots, n $ remain in the summation of each equation.

Under the linear independence condition in Assumption A3, the linear system is a $ 2n_{s} \times 2n_{s} $ full-rank system. Therefore, the only solution is $h'_{i, (k)} h'_{i, (q)} = 0$ and $h''_{i, (k,q)} = 0$ for $ i = n_{c}+1, \dots, n $ and $ k, q \in [n], k \neq q $.

As $ h(\cdot) $ is smooth over $\mathcal{Z}$, its Jacobian can written as:
\begin{equation}
    \scalebox{1.25}{$
        \mathbf{J}_{h} =
        \begin{bmatrix}
        \mathbf{A}:= \frac{\partial \zz_{c} }{\partial \hat{\zz}_{c} }
        & \rvline & \mathbf{B}:= \frac{\partial \zz_{c} }{\partial \hat{\zz}_{s} } \\
        \hline
        \mathbf{C}:= \frac{\partial \zz_{s} }{\partial \hat{\zz}_{c} } & \rvline &
        \mathbf{D}:= \frac{\partial \mathbf{z}_{s} }{\partial \hat{\zz}_{s} },
        \end{bmatrix}
    $}
\end{equation}

Note that $h'_{i, (k)} h'_{i, (q)} = 0$ implies that for each $i = n_{c}+1, \dots, n $, $ h'_{i, (k)} \neq 0 $ for at most one element $k \in [n]$. 
Therefore, there is only at most one non-zero entry in each row indexed by $ i = n_{c}+1, \dots, n $ in the Jacobian matrix $ \mathbf{J}_{h} $.
Further, the invertibility of $ h (\cdot)$ necessitates $ \mathbf{J}_{h} $ to be full-rank, which implies that there is exactly one non-zero component in each row of matrices $ \mathbf{C} $ and $ \mathbf{D} $.
\\\\
Since for every $i \in \{n_{c}+1, \ldots, n \}$, $z_{i}$ has changing distributions over $ \mathbf{u} $ and all components $ \hat{z}_{k} $ for $ i \in \{1, \ldots, n_{c} \} $ (i.e., $\hat{\zz}_{c}$) have invariant distributions over $ \mathbf{u} $, we can deduce that $ \mathbf{C} = \mathbf{0} $ and the only non-zero entry $ \frac{ z_{i} }{ \hat{z}_{k} } $ must reside in $ \mathbf{D} $ with $ k \in \{n_{c}+1, \ldots, n \} $.
Therefore, for each true variable in the changing part $ z_{i}, i \in \{n_{c}+1, \ldots, n \} $, there exists one estimated variable in the changing part $ \hat{z}_{k},  k \in \{n_{c}+1, \ldots, n \} $ such that $ z_{i} = h'_{i} ( \hat{z}_{k} ) $.
Further, because $\mathbf{J}_{h}$ is full-rank and $ \mathbf{C} $ is a zero matrix, the block matrix corresponding to $ \frac{\partial \hat{\zz}_{s}}{ \partial \zz_{c} } $ in the Jacobian $ \mJ_{h^{-1}} $ is also a zero matrix, which implies that $\hat{\zz}_{s}$ is not influenced by $\zz_{c}$.
Therefore, there exists an one-to-one mapping between $\zz_{s}$ and $\hat{\zz}_{s}$, which implies that $ h'_{i}  (\cdot) $ is invertible for each $ i \in \{n_{c}+1, \ldots, n \} $.
Thus, the changing components $ \zz_{s} $ are identified up to permutation and component-wise invertible transformations.

\end{proof}

\subsection{Proof of the invariant part identifiability} \label{sec:proof_invariant_part}
In the section, we present the proof of the invariant part identifiability (i.e., Theorem~\ref{thm:block_identifiability_content}).

\ctheory*

\begin{proof} \label{proof:block_identifiability_content}
    We divide our proof into four steps to aid understanding.
    \\\\
    In Step 1, we leverage properties of the generating process (i.e., Equation~\ref{eq:data_generating_process}) and the marginal distribution matching condition (i.e., Equation~\ref{eq:estimated_condition}) to express the marginal invariance with the indeterminacy transformation $ \bar{h}: \mathcal{Z} \to \mathcal{Z} $ between the estimated and the true latent variable.
    The introduction of $ \bar{h}(\cdot) $ allows us to formalize the block-identifiability condition.
    \\\\
    In Step 2 and Step 3, we show that the estimated content variable $ \hat{\mathbf{z}}_{c} $ does not depend on the true style variable $ \mathbf{z}_{s} $, that is, $ \bar{h}_{c}(\mathbf{z}) $ does not depend on the input $ \mathbf{z}_{s} $.
    To this end, in Step 2, we derive its equivalent statements, which can ease the rest of the proof and avert technical issues (e.g.,  sets of zero probability measures).
    In Step 3, we prove the equivalent statement by contradiction. Specifically, we show that if $\hat{\mathbf{z}}_{c}$ depended on $\mathbf{z}_{s}$, the invariance derived in Step 1 would break.
    \\\\
    In Step 4, we use the conclusion in Step 3, the smooth and bijective properties of $h(\cdot)$, and the conclusion in Theorem~\ref{thm:changing_part_identifiability}, to show the invertibility of the indeterminacy function between the content variables, i.e., the mapping $\hat{\mathbf{z}}_{c} = \bar{h}_{c}(\mathbf{z}_{c})$ being invertible.
    \\\\
    \textbf{Step 1.}
    As the data generating process of the learned model $ ( \hat{ g }, \, p_{ \hat{\mathbf{z}}_{s} } , \, p_{ \hat{\mathbf{z}}_{s} | \mathbf{ \mathbf{u} } } ) $ (Equation~\ref{eq:data_generating_process}) establishes the independence between the generating process $ \hat{\mathbf{z}}_{c} \sim p_{\hat{\mathbf{z}}_{c}} $ \footnote{
        Note that in this proof, we decorate quantities with $ \hat{\cdot} $ to indicate that they are associated with the estimated model $ ( \hat{g}, \, p_{\hat{\zz}_{c}}, \, p_{ \hat{\zz}_{s} | \uu } ) $.
    }
    and $ \mathbf{u} $, it follows that for any $ A_{\mathbf{z}_{c}} \subseteq \mathcal{Z}_{c} $,
    \begin{align} 
        \Pb{ \{ \hat{g}^{-1}_{1:n_{c}} ( \hat{\mathbf{x}} ) \in A_{\mathbf{z}_{c}} \} | \{ \mathbf{u} = \mathbf{u}_{1} \} } 
        &= \Pb{ \{ \hat{g}^{-1}_{1:n_{c}} ( \hat{\mathbf{x}} ) \in A_{\mathbf{z}_{c}} \} | \{ \mathbf{u} = \mathbf{u}_{2} \} }, \quad \forall \mathbf{u}_{1}, \mathbf{u}_{2} \in \mathcal{U} \nonumber \\
        & \iff \nonumber \\
        \Pb{ \{ \hat{\mathbf{x}} \in (\hat{g}^{-1}_{1:n_{c}})^{-1} (A_{\mathbf{z}_{c}}) \} | \{ \mathbf{u} = \mathbf{u}_{1} \} } 
        &= \Pb{ \{ \hat{\mathbf{x}} \in (\hat{g}^{-1}_{1:n_{c}})^{-1} (A_{\mathbf{z}_{c}}) \} | \{ \mathbf{u} = \mathbf{u}_{2} \} }, \quad \forall \mathbf{u}_{1}, \mathbf{u}_{2} \in \mathcal{U} \label{eq:marginal_matching_learned}
    \end{align}
    where $ \hat{g}^{-1}_{1:n_{c}}: \mathcal{X} \to \mathcal{Z}_{c} $ denotes the estimated transformation from the observation to the content variable and $ (\hat{g}^{-1}_{1:n_{c}})^{-1} (A_{\mathbf{z}_{c}}) \subseteq \mathcal{X} $ is the pre-image set of $ A_{\mathbf{z}_{c}} $, that is, the set of estimated observations $ \hat{\mathbf{x}} $ originating from content variables $ \hat{\mathbf{z}}_{c} $ in $ A_{\mathbf{z}_{c}} $.

    Because of the matching observation distributions between the estimated model and the true model in Equation~\ref{eq:estimated_condition}, the relation in Equation~\ref{eq:marginal_matching_learned} can be extended to observation $\mathbf{x}$ from the true generating process $ (g, p_{\mathbf{z}_{c}}, p_{\mathbf{z}_{s} | \mathbf{u}}) $, i.e.,
    \begin{align}
        \Pb{ \{ \mathbf{x} \in (\hat{g}^{-1}_{1:n_{c}})^{-1} (A_{\mathbf{z}_{c}}) \} | \{ \mathbf{u} = \mathbf{u}_{1} \} } 
        & = \Pb{ \{ {\mathbf{x}} \in (\hat{g}^{-1}_{1:n_{c}})^{-1} (A_{\mathbf{z}_{c}}) \} | \{ \mathbf{u} = \mathbf{u}_{2} \} } \nonumber \\
        & \iff \nonumber \\
        \Pb{ \{ \hat{g}^{-1}_{1:n_{c}} (\mathbf{x}) \in A_{\mathbf{z}_{c}} \} | \{ \mathbf{u} = \mathbf{u}_{1} \} } 
        & = \Pb{ \{ \hat{g}^{-1}_{1:n_{c}} (\mathbf{x}) \in A_{\mathbf{z}_{c}} \} | \{ \mathbf{u} = \mathbf{u}_{2} \} }. \label{eq:equation_with_g}
    \end{align}

    Since $ g $ and $ \hat{g} $ are smooth and injective, there exists a smooth and injective $ \bar{h} = \hat{g}^{-1} \circ g: \mathcal{Z} \to \mathcal{Z} $. We note that by definition $\bar{h} = h^{-1} $ where $h$ is introduced in the proof of Theorem~\ref{thm:changing_part_identifiability} (Appendix~\ref{sec:changing_part_proof}).
    Expressing $ \hat{g}^{-1} = \bar{h} \circ g^{-1} $ and $ \bar{h}_{c} (\cdot):= \bar{h}_{1:n_{c}} (\cdot): \mathcal{Z} \to \mathcal{Z}_{c} $ in Equation~\ref{eq:equation_with_g} yields
    \begin{align}
        \Pb{ \{ \bar{h}_{c} ( \mathbf{z} ) \in A_{\mathbf{z}_{c}} \} | \{ \mathbf{u} = \mathbf{u}_{1} \} } 
        &= \Pb{ \{ \bar{h}_{c} ( \mathbf{z} ) \in A_{\mathbf{z}_{c}} \} | \{ \mathbf{u} = \mathbf{u}_{2} \} }, \nonumber \\
        & \iff \nonumber \\
        \Pb{ \{ \mathbf{z} \in \bar{h}_{c}^{-1} (A_{\mathbf{z}_{c}}) \} | \{ \mathbf{u} = \mathbf{u}_{1} \} } 
        &= \Pb{ \{ \mathbf{z} \in \bar{h}_{c}^{-1} (A_{\mathbf{z}_{c}}) \} | \{ \mathbf{u} = \mathbf{u}_{2} \} }, \nonumber \\
        & \iff \nonumber \\
        \int_{ \mathbf{z} \in \bar{h}_{c}^{-1} ( A_{ \mathbf{z}_{c} } ) } p_{ \mathbf{z} | \mathbf{u} } ( \mathbf{z} | \mathbf{u}_{1} ) \, d\mathbf{z}
        &= \int_{ \mathbf{z} \in \bar{h}_{c}^{-1} ( A_{ \mathbf{z}_{c} } ) } p_{ \mathbf{z} | \mathbf{u} } ( \mathbf{z} | \mathbf{u}_{2} ) \, d\mathbf{z},
        \label{eq:equal_preimage_zc}
    \end{align}
    where $ \bar{h}_{c}^{-1} (A_{\mathbf{z}_{c}}) = \{ \mathbf{z} \in \mathcal{Z}: \bar{h}_{c} (\mathbf{z} ) \in A_{\mathbf{z}_{c}} \} $ is the pre-image of $ A_{\mathbf{z}_{c}} $, i.e.\ those latent variables containing content variables in $ A_{\mathbf{z}_{c}} $ after the indeterminacy transformation $ h $.
    \\\\
    Based on the generating process in Equations~\ref{eq:data_generating_process}, we can re-write Equation~\ref{eq:equal_preimage_zc} as follows: $ \forall A_{\mathbf{z}_{c}} \subseteq \mathcal{Z}_{c} $, 
    \begin{align}
        \begin{split}
            \int_{ [\mathbf{z}_{c}^{\top}, \mathbf{z}_{s}^{\top}]^{\top} \in \bar{h}_{c}^{-1} ( A_{ \mathbf{z}_{c} } ) } p_{\mathbf{z}_{c}} ( \mathbf{z_{c}} ) \left(
                p_{\mathbf{z}_{s} | \mathbf{u}} ( \mathbf{z_{s}} | \mathbf{u}_{1} ) - p_{\mathbf{z}_{s} | \mathbf{u}} ( \mathbf{z_{s}} | \mathbf{u}_{2} )
            \right) d\mathbf{z}_{s} d\mathbf{z}_{c} = 0. \label{eq:equation_to_test}
        \end{split}
    \end{align}

    \textbf{Step 2.}
    In order to show the block-identifiability of $\mathbf{z}_{c}$, we would like to prove that $ \mathbf{z}_{c} := \bar{h}_{c} ( [\mathbf{z}_{c}^{\top}, \mathbf{z}_{s}^{\top}]^{\top}) $ does not depend on $ \mathbf{z}_{s} $. 
    To this end, we first develop one equivalent statement (i.e.\ Statement 3 below) and prove it in later step instead. 
    By doing so, we are able to leverage the full-supported density function assumption to avert technical issues.
    \begin{itemize}
        \item Statement 1: $ \bar{h}_{c} ( [\mathbf{z}_{c}^{\top}, \mathbf{z}_{s}^{\top}]^{\top}) $ does not depend on $ \mathbf{z}_{s} $.
        \item Statement 2: $ \forall \mathbf{z}_{c} \in \mathcal{Z}_{c} $, it follows that $ \bar{h}_{c}^{-1} ( \mathbf{z}_{c} ) = B_{\mathbf{z}_{c}} \times \mathcal{Z}_{s} $ where $ B_{\mathbf{z}_{c}} \neq \emptyset $ and $ B_{\mathbf{z}_{c}} \subseteq \mathcal{Z}_{c} $. 
        \item Statement 3: $ \forall \mathbf{z}_{c} \in \mathcal{Z}_{c} $, $ r \in \mathbb{R}^{+} $, it follows that $ \bar{h}_{c}^{-1} ( \mathcal{B}_{r} (\mathbf{z}_{c}) ) = B^{+}_{\mathbf{z}_{c}} \times \mathcal{Z}_{s} $ where $ \mathcal{B}_{r} (\mathbf{z}_{c}):= \{ \mathbf{z}'_{c} \in \mathcal{Z}_{c}: || \mathbf{z}'_{c} - \mathbf{z}_{c} ||^{2} < r \} $, $ B^{+}_{\mathbf{z}_{c}} \neq \emptyset $, and $ B^{+}_{\mathbf{z}_{c}} \subseteq \mathcal{Z}_{c} $.
    \end{itemize}
    Statement 2 is a mathematical formulation of Statement 1. 
    Statement 3 generalizes singletons $ \mathbf{z}_{c} $ in Statement 2 to open, non-empty balls $ \mathcal{B}_{r} (\mathbf{z}_{c}) $.
    Later, we use Statement 3 in Step 3 to show the contraction to Equation~\ref{eq:equation_to_test}.
    \\\\
    Leveraging the continuity of $ \bar{h}_{c} (\cdot) $, we can show the equivalence between Statement 2 and Statement 3 as follows.
    We first show that Statement 2 implies Statement 3.
    $ \forall \mathbf{z}_{c} \in \mathcal{Z}_{c}, r \in \mathbb{R}^{+} $, $ \bar{h}_{c}^{-1} ( ( \mathcal{B}_{r} (\mathbf{z}_{c}) ) ) = \cup_{ \mathbf{z}_{c}' \in \mathcal{B}_{r} (\mathbf{z}_{c}) } h^{-1}_{c} ( \mathbf{z}_{c}' ) $. 
    Statement 2 indicates that every participating sets in the union satisfies $ h^{-1}_{c} (\mathbf{z}_{c}') = B_{\mathbf{z}_{c}}' \times \mathcal{Z}_{s} $, thus the union $ \bar{h}_{c}^{-1} ( ( \mathcal{B}_{r} (\mathbf{z}_{c}) ) ) $ also satisfies this property, which is Statement 3. 
    \\\\
    Then, we show that Statement 3 implies Statement 2 by contradiction.
    Suppose that Statement 2 is false, then $ \exists \hat{\mathbf{z}}_{c} \in \mathcal{Z}_{c}$ such that there exist $ \hat{\mathbf{z}}_{c}^{B} \in \{ \mathbf{z}_{1:n_{c}}: \mathbf{z} \in \bar{h}_{c}^{-1} (\hat{\mathbf{z}}_{c}) \} $ and $ \hat{\mathbf{z}}_{s}^{B} \in \mathcal{Z}_{s} $ resulting in $ \bar{h}_{c} ( \hat{\mathbf{z}}^{B} ) \neq \hat{\mathbf{z}}_{c} $ where $ \hat{\mathbf{z}}^{B} = [(\hat{\mathbf{z}}_{c}^{B})^{\top}, (\hat{\mathbf{z}}_{s}^{B})^{\top}]^{\top} $.
    As $ \bar{h}_{c} (\cdot) $ is continuous, there exists $ \hat{r} \in \mathbb{R}^{+} $ such that $ \bar{h}_{c} ( \hat{\mathbf{z}}^{B} ) \not\in \mathcal{B}_{\hat{r}} ( \hat{\mathbf{z}}_{c} ) $.
    That is, $ \hat{\mathbf{z}}^{B} \not\in h^{-1}_{c} ( \mathcal{B}_{\hat{r}} ( \hat{\mathbf{z}}_{c} ) ) $.
    Also, Statement 3 suggests that $ h^{-1}_{c} ( \mathcal{B}_{\hat{r}} ( \hat{\mathbf{z}}_{c} ) ) = \hat{B}_{\mathbf{z}_{c}} \times \mathcal{Z}_{s} $.
    By definition of $\hat{\mathbf{z}}^{B}$, it is clear that $ \hat{\mathbf{z}}^{B}_{1:n_{c}} \in \hat{B}_{\mathbf{z}_{c}} $.
    The fact that $ \hat{\mathbf{z}}^{B} \not\in h^{-1}_{c} ( \mathcal{B}_{\hat{r}} ( \hat{\mathbf{z}}_{c} ) ) $ contradicts Statement 3.
    Therefore, Statement 2 is true under the premise of Statement 3. 
    We have shown that Statement 3 implies Statement 2.
    In summary, Statement 2 and Statement 3 are equivalent, and therefore proving Statement 3 suffices to show Statement 1.
    \\\\
    \textbf{Step 3.}
    In this step, we prove Statement 3 by contradiction.
    Intuitively, we show that if $\bar{h}_{c}(\cdot)$ depended on $\hat{z}_{s}$, the preimage $ \bar{h}_{c}^{-1} ( \mathcal{B}_{r} (\mathbf{z}_{c}) ) $ could be partitioned into two parts (i.e.\ $ B^{*}_{ \mathbf{z} }$ and $ \bar{h}_{c}^{-1} ( A^{*}_{ \mathbf{z}_{c} } ) \setminus B^{*}_{ \mathbf{z} } $ defined below).
    The dependency between $\bar{h}_{c}(\cdot)$ and $\hat{z}_{s}$ is captured by $ B^{*}_{ \mathbf{z} } $, which would not emerge otherwise.
    In contrast, $ \bar{h}_{c}^{-1} ( A^{*}_{ \mathbf{z}_{c} } ) \setminus B^{*}_{ \mathbf{z} } $ also exists when $\bar{h}_{c}(\cdot)$ does not depend on $\hat{z}_{s}$.
    We evaluate the invariance relation Equation~\ref{eq:equation_to_test} and show that the integral over $ \bar{h}_{c}^{-1} ( A^{*}_{ \mathbf{z}_{c} } ) \setminus B^{*}_{ \mathbf{z} } $ (i.e.\ $T_{1}$) is always $0$, however, the integral over $ B^{*}_{ \mathbf{z} } $ (i.e.\ $T_{2}$) is necessarily non-zero, which leads to the contraction with Equation~\ref{eq:equation_to_test} and thus shows the $\bar{h}_{c}(\cdot)$ cannot depend on $\hat{z}_{s}$.
    \\\\
    First, note that because $ \mathcal{B}_{r} (\mathbf{z}_{c}) $ is open and $ \bar{h}_{c} (\cdot) $ is continuous, the pre-image $ \bar{h}_{c}^{-1} ( \mathcal{B}_{r} (\mathbf{z}_{c}) ) $ is open. 
    In addition, the continuity of $ h(\cdot) $ and the matched observation distributions $ \forall \mathbf{u}' \in \mathcal{U}, \; \Pb{ \{ \mathbf{x} \in A_{\mathbf{x}} \} | \{ \mathbf{u} = \mathbf{u}' \} } = \Pb{ \{ \hat{\mathbf{x}} \in A_{\mathbf{x}} \} | \{ \mathbf{u} = \mathbf{u}' \} } $ lead to $ h(\cdot) $ being bijection as shown in \cite{klindt2021nonlinear},which implies that $ \bar{h}_{c}^{-1} ( \mathcal{B}_{r} (\mathbf{z}_{c}) ) $ is non-empty.
    Hence, $ \bar{h}_{c}^{-1} ( \mathcal{B}_{r} (\mathbf{z}_{c}) ) $ is both non-empty and open.
    Suppose that $ \exists A_{\mathbf{z}_{c}}^{*}:= \mathcal{B}_{r^{*}} (\mathbf{z}^{*}_{c}) $ where $ \mathbf{z}^{*}_{c} \in \mathcal{Z}_{c} $, $ r^{*} \in \mathbb{R}^{+} $, such that $ B^{*}_{ \mathbf{z} }:= \{ \mathbf{z} \in \mathcal{Z}: \mathbf{z} \in \bar{h}_{c}^{-1} ( A_{\mathbf{z}_{c}}^{*} ), \, \{ \mathbf{z}_{1:n_{c}} \} \times \mathcal{Z}_{s} \not\subseteq \bar{h}_{c}^{-1} ( A_{\mathbf{z}_{c}}^{*} ) \} \neq \emptyset $.  
    Intuitively, $ B^{*}_{ \mathbf{z} } $ contains the partition of the pre-image $ \bar{h}_{c}^{-1}(A_{\mathbf{z}_{c}}^{*}) $ that the style part $ \mathbf{z}_{n_{c}+1:n} $ cannot take on any value in $ \mathcal{Z}_{s} $.  
    Only certain values of the style part were able to produce specific outputs of indeterminacy $ \bar{h}_{c}(\cdot) $.
    Clearly, this would suggest that $ \bar{h}_{c}(\cdot) $ depends on $ \mathbf{z}_{c} $.
    \\
    To show contraction with Equation~\ref{eq:equation_to_test}, we evaluate the LHS of Equation~\ref{eq:equation_to_test} with such a $ A_{\mathbf{z}_{c}}^{*} $:
    \begin{align*}
        &\int_{ [\mathbf{z}_{c}^{\top}, \mathbf{z}_{s}^{\top}]^{\top} \in \bar{h}_{c}^{-1} ( A^{*}_{ \mathbf{z}_{c} } ) } p_{\mathbf{z}_{c}} ( \mathbf{z_{c}} ) \left(
            p_{\mathbf{z}_{s} | \mathbf{u}} ( \mathbf{z_{s}} | \mathbf{u}_{1} ) - p_{\mathbf{z}_{s} | \mathbf{u}} ( \mathbf{z_{s}} | \mathbf{u}_{2} )
        \right) d\mathbf{z}_{s} d\mathbf{z}_{c} \\ 
        &= \underbrace{
                \int_{ [\mathbf{z}_{c}^{\top}, \mathbf{z}_{s}^{\top}]^{\top} \in \bar{h}_{c}^{-1} ( A^{*}_{ \mathbf{z}_{c} } ) \setminus B^{*}_{ \mathbf{z} } } p_{\mathbf{z}_{c}} ( \mathbf{z_{c}} ) \left(
                p_{\mathbf{z}_{s} | \mathbf{u}} ( \mathbf{z_{s}} | \mathbf{u}_{1} ) - p_{\mathbf{z}_{s} | \mathbf{u}} ( \mathbf{z_{s}} | \mathbf{u}_{2} )
            \right) d\mathbf{z}_{s} d\mathbf{z}_{c} 
        }_{ T_{1} }\\
        &\qquad+ \underbrace{
            \int_{ [\mathbf{z}_{c}^{\top}, \mathbf{z}_{s}^{\top}]^{\top} \in B^{*}_{ \mathbf{z} } } p_{\mathbf{z}_{c}} ( \mathbf{z_{c}} ) \left(
                p_{\mathbf{z}_{s} | \mathbf{u}} ( \mathbf{z_{s}} | \mathbf{u}_{1} ) - p_{\mathbf{z}_{s} | \mathbf{u}} ( \mathbf{z_{s}} | \mathbf{u}_{2} )
            \right) d\mathbf{z}_{s} d\mathbf{z}_{c}
        }_{ T_{2} }.
    \end{align*} 

    We first look at the value of $ T_{1} $.
    When $ \bar{h}_{c}^{-1} ( A^{*}_{ \mathbf{z}_{c} } ) \setminus B^{*}_{ \mathbf{z} } = \emptyset $, $T_{1}$ evaluates to $ 0 $.
    Otherwise, by definition we can rewrite $ \bar{h}_{c}^{-1} ( A^{*}_{ \mathbf{z}_{c} } ) \setminus B^{*}_{ \mathbf{z} } $ as $ C^{*}_{ \mathbf{z}_{c}} \times \mathcal{Z}_{s} $ where $ C^{*}_{ \mathbf{z}_{c} } \neq \emptyset $ and $ C^{*}_{ \mathbf{z}_{c} } \subset \mathcal{Z}_{c} $.
    With this expression, it follows that
    \begin{align*}
        & \int_{ [\mathbf{z}_{c}^{\top}, \mathbf{z}_{s}^{\top}]^{\top} \in C^{*}_{ \mathbf{z}_{c} } } p_{\mathbf{z}_{c}} ( \mathbf{z_{c}} ) \left(
            p_{\mathbf{z}_{s} | \mathbf{u}} ( \mathbf{z_{s}} | \mathbf{u}_{1} ) - p_{\mathbf{z}_{s} | \mathbf{u}} ( \mathbf{z_{s}} | \mathbf{u}_{2} )
        \right) d\mathbf{z}_{s} d\mathbf{z}_{c} \\
        & \qquad = \int_{ \mathbf{z}_{c} \in C^{*}_{ \mathbf{z}_{c} } } p_{\mathbf{z}_{c}} ( \mathbf{z_{c}} ) \int_{ \mathbf{z}_{s} \in \mathcal{Z}_{s} } \left(
                p_{\mathbf{z}_{s} | \mathbf{u}} ( \mathbf{z_{s}} | \mathbf{u}_{1} ) - p_{\mathbf{z}_{s} | \mathbf{u}} ( \mathbf{z_{s}} | \mathbf{u}_{2} )
            \right) d\mathbf{z}_{s} d\mathbf{z}_{c} \\
        & \qquad = \int_{ \mathbf{z}_{c} \in C^{*}_{ \mathbf{z}_{c} } } p_{\mathbf{z}_{c}} ( \mathbf{z_{c}} ) \left( 1 - 1 \right) d\mathbf{z}_{c} = 0.
    \end{align*}
    Therefore, in both cases $T_{1}$ evaluates to $0$ for $A^{*}_{ \mathbf{z}_{c} }$.
    \\\\
    Now, we address $T_{2}$.
    As discussed above, $ \bar{h}_{c}^{-1} ( A^{*}_{\mathbf{z}_{c}} ) $ is open and non-empty.
    Because of the continuity of $\bar{h}_{c}(\cdot)$, $ \forall \mathbf{z}_{B} \in B_{\mathbf{z}}^{*} $, there exists $ r(\mathbf{z}_{B}) \in \mathbb{R}^{+}$ such that $ \mathcal{B}_{r(\mathbf{z}_{B})} (\mathbf{z}_{B}) \subseteq B_{\mathbf{z}}^{*} $.
    As $ p_{ \mathbf{z} | \mathbf{u} } ( \mathbf{z} | \mathbf{u} ) > 0 $ over $ (\mathbf{z}, \mathbf{u}) $, we have $ \Pb{ \{ \mathbf{z} \in B^{*}_{ \mathbf{z} } \} | \{ \mathbf{u} = \mathbf{u}' \} } \ge \Pb{ \{ \mathbf{z} \in \mathcal{B}_{r(\mathbf{z}_{B})} (\mathbf{z}_{B}) | \{ \mathbf{u} = \mathbf{u}' \} \} } > 0 $ for any $ \mathbf{u}' \in \mathcal{U}$.
    Assumption A4 indicates that $ \exists \mathbf{u}_{1}^{*}, \mathbf{u}_{2}^{*} $, such that
    \begin{align*}
        T_{2} := \int_{ [\mathbf{z}_{c}^{\top}, \mathbf{z}_{s}^{\top}]^{\top} \in B^{*}_{ \mathbf{z} } } p_{\mathbf{z}_{c}} ( \mathbf{z_{c}} ) \left(
            p_{\mathbf{z}_{s} | \mathbf{u}} ( \mathbf{z_{s}} | \mathbf{u}_{1}^{*} ) - p_{\mathbf{z}_{s} | \mathbf{u}} ( \mathbf{z_{s}} | \mathbf{u}_{2}^{*} )
        \right) d\mathbf{z}_{s} d\mathbf{z}_{c} \neq 0.
    \end{align*}
    Therefore, for such $ A^{*}_{\mathbf{z}_{c}} $, we would have $ T_{1} + T_{2} \neq 0 $ which leads to contradiction with Equation~\ref{eq:equation_to_test}.
    We have proved by contradiction that Statement 3 is true and hence Statement 1 holds, that is, $ \bar{h}_{c} (\cdot) $ does not depend on the style variable $ \mathbf{z}_{s} $. 
    \\\\
    \textbf{Step 4.}
    With the knowledge that $ \bar{h}_{c} (\cdot) $ does not depend on the style variable $ \mathbf{z}_{s} $, we now show that there exists an invertible mapping between the true content variable $\mathbf{z}_{c}$ and the estimated version $ \hat{\mathbf{z}}_{c} $.

    As $ \bar{h}(\cdot) $ is smooth over $\mathcal{Z}$, its Jacobian can written as:
    \begin{equation}
        \scalebox{1.25}{$
            \mathbf{J}_{\bar{h}} =
            \begin{bmatrix}
            \mathbf{A}:= \frac{\partial \hat{\mathbf{z}}_{c} }{\partial \mathbf{z}_{c} }
            & \rvline & \mathbf{B}:= \frac{\partial \hat{\mathbf{z}}_{c} }{\partial \mathbf{z}_{s} } \\
            \hline
            \mathbf{C}:= \frac{\partial \hat{\mathbf{z}}_{s} }{\partial \mathbf{z}_{c} } & \rvline &
            \mathbf{D}:= \frac{\partial \hat{\mathbf{z}}_{s} }{\partial \mathbf{z}_{s} },
            \end{bmatrix}
        $}
    \end{equation}
    where we use notation $ \hat{\mathbf{z}}_{c} = \bar{h}(\mathbf{z})_{1:n_{c}} $ and $ \hat{\mathbf{z}}_{s} = \bar{h}(\mathbf{z})_{n_{c}+1:n} $.
    On one hand, as we have shown that $ \hat{\mathbf{z}}_{c} $ does not depend on $ \mathbf{z}_{s} $, it follows that $ \mathbf{B} = \mathbf{0} $.
    On the other hand, as $ h(\cdot) $ is invertible over $\mathcal{Z}$, $\mathbf{J}_{\bar{h}}$ is non-singular. 
    Therefore, $ \mathbf{A} $ must be non-singular due to $ \mathbf{B} = \mathbf{0} $.
    We note that $ \mathbf{A} $ is the Jacobian of the function $ \bar{h}'_{c} (\mathbf{z}_{c}) := \bar{h}_{c} (\mathbf{z}) : \mathcal{Z}_{c} \to \mathcal{Z}_{c} $, which takes only the content part $ \mathbf{z}_{c} $ of the input $ \mathbf{z} $ into $ \bar{h}_{c} $.
    Also, the fact that the block matrix $\mathbf{B} = \mathbf{0} $ implies the block matrix corresponding to $ \frac{ \partial \zz_{c} }{ \partial \hat{\zz}_{s} } $ in the Jacobian matrix $ \mJ_{h} $ is also zero.
    Therefore, $\zz_{c}$ does not depend on $\hat{\zz}_{s} $.
    Together with the invertibility of $\bar{h}$, we can conclude that $\bar{h}'_{c}$ is invertible.
    Therefore, there exists an invertible function $ \bar{h}'_{c} $ between the estimated and the true content variables such that $ \hat{\mathbf{z}}_{c} = \bar{h}'_{c} ( \mathbf{z}_{c} ) $, which concludes the proof that $ \mathbf{z}_{c} $ is block-identifiable via $ \hat{g}^{-1}(\cdot) $. 
\end{proof}

\subsection{Synthetic Data Experiments}
\label{sec:synthetic_setups}
We provide additional details of our synthetic experiments in Section~\ref{sec:synthetic_experiments}.
\paragraph{Architecture}
For all methods in our synthetic data experiments, the VAE encoder and decoder are $6$-layer MLP's with a hidden dimension of $32$ and Leaky-ReLU ($\alpha=0.2$) activation functions.
For our method, we use component-wise spline flows \cite{durkan2019neural} with monotonic linear rational splines to modulate the change components. 
We use $8$ bins for the linear splines and set the bound to be $5$. 
The $\beta$-VAE shares the same VAE model with ours and the iVAE implementation is adopted from \cite{khemakhem2020ice}.

\paragraph{Training hyper-parameters}
We apply AdamW to train VAE and flow models for $100$ epochs. We use a learning rate of $0.002$ with a batch size of $128$. The weight decay parameter of AdamW is set to $0.0001$.
For VAE training, we set the $\beta$ parameter of KL loss term to $0.1$.

\subsection{Real-world Data Experiments}
\label{sec:real_world_setup}
We provide implementation details and additional visualization of real-world data experiments in Section~\ref{sec:real_world_experiments}.
\paragraph{Dataset description}
PACS \cite{li2017deeper} is a multi-domain dataset containing $9991$ images from 4 domains of different styles: Photo, Artpainting, Cartoon, Sketch. These domains share the same seven categories. Office-Home \cite{venkateswara2017deep} dataset consists of $4$ domains, with each domain containing images from 65 categories of everyday objects and a total of around $15, 500$ images. The domains include 1) Art: artistic depictions of objects; 2) Clipart: collection of clipart images; 3) Product: images of objects without a background; 4) Real-World: images of objects captured with a regular camera.

\paragraph{Implementation Details}
For the PACS dataset, we follow the protocols in \cite{yang2020curriculum,wang2020learning} and use ResNet-18 pre-trained on ImageNet. For the Office-Home dataset, we follow the protocols in \cite{park2021information} and use pre-trained ResNet-50
as our backbone network. For two datasets, we use SGD with Nesterov momentum with learning rate 0.01. Since it can be challenging to train VAE on high-resolution images, we use extracted features as our VAE input. In particular, we use the features after the last pooling layer in the ResNet as our VAE input. We use 2-layer fully connected network as the VAE encoder. For the flow model, we use Deep Sigmoidal Flow (DSF) \cite{huang2018neural} across all our experiments. 
For hyper-parameters, we fix $\alpha_{1}=0.1$ for all our experiments
and select $\alpha_2$ in $ [ 1e-5, 5e-5, 1e-4, 5e-4] $.

\paragraph{Feature Quality}
In addition, we visualize the learned features by our method in Figure \ref{fig:tsne} and make comparison with the Source Only baseline.
We can observe that, for the baseline the features in different classes are mixed, which renders the classification task significantly harder. In contrast, the features learned by iMSDA are more clustered and discriminative.   
\begin{figure}
     \centering
     \begin{subfigure}[b]{0.23\textwidth}
         \centering
         \includegraphics[width=\textwidth]{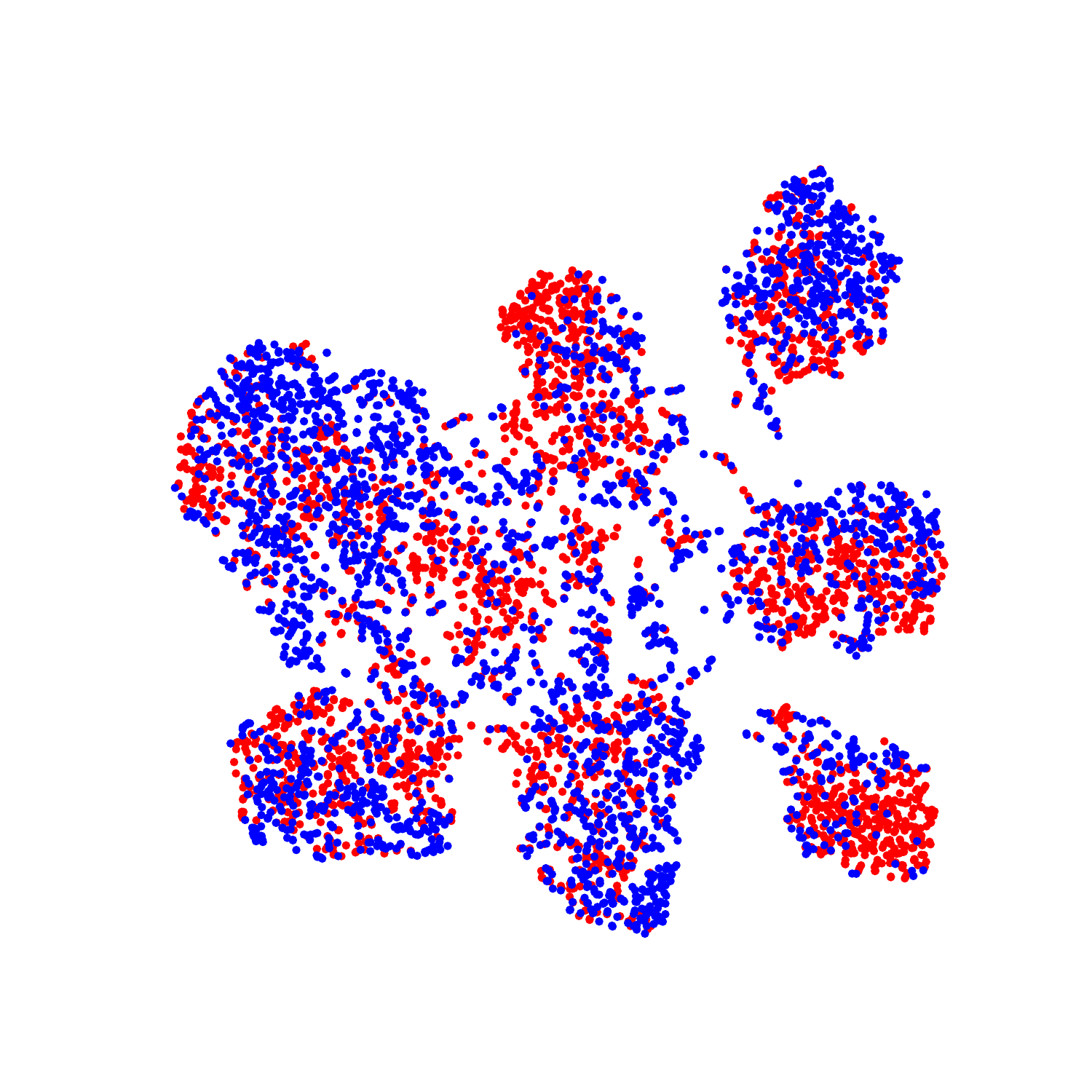}
         \caption{Source Only}
         \label{fig:synthetic_domain_number_baselines_mcc}
     \end{subfigure}
     \begin{subfigure}[b]{0.23\textwidth}
         \centering
         \includegraphics[width=\textwidth]{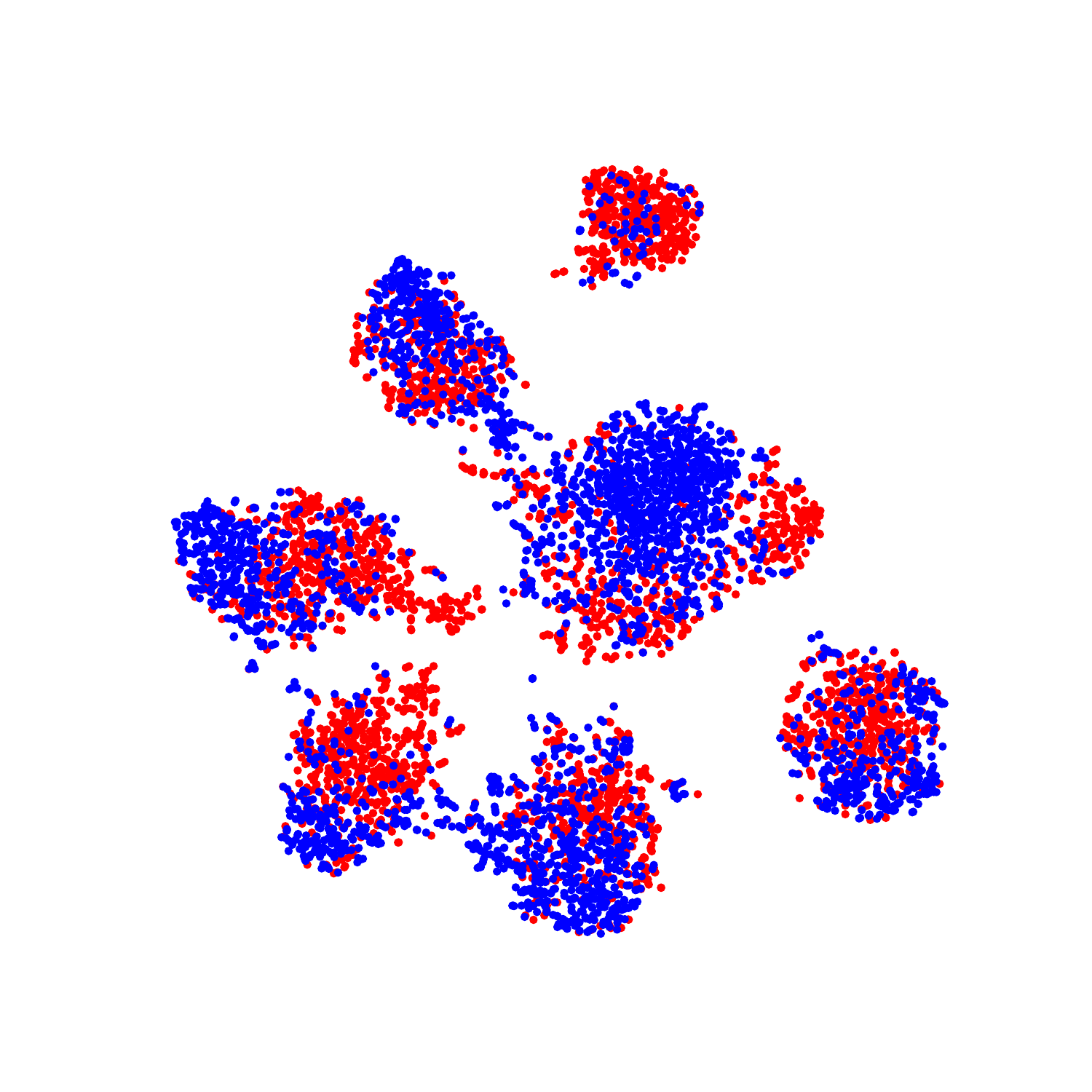}
         \caption{\ours}
         \label{fig:synthetic_domain_number_baselines_r2}
     \end{subfigure}
        \caption{
        {The t-SNE visualizations of learned features on the $\rightarrow$ Sketch task in PACS. Red: source domains, Blue: target domain}.
        }
        \label{fig:tsne}
\end{figure}


\end{document}